\def\isReadyToSubmit{0}   % mark 1 before submission (draft or camera ready)
\def\isAnonymousSubmission{0}  % mark 1 if this is an anonymous submission
\newcommand{\subparagraph}{}
\titlespacing{\section}{0pt}{*.6}{*.2}
\titlespacing{\subsection}{0pt}{*.4}{*.2}
\titlespacing{\subsubsection}{0pt}{*.4}{*.2}
\patchcmd{\ttlh@hang}{\parindent\z@}{\parindent\z@\leavevmode}{}{}
\patchcmd{\ttlh@hang}{\noindent}{}{}{}
\let\oldenumerate\enumerate
\renewcommand{\enumerate}{
  \oldenumerate
  \setlength{\itemsep}{.0pt}
  \setlength{\parskip}{0pt}
  \setlength{\parsep}{0pt}
}
\let\olditemize\itemize
\renewcommand{\itemize}{
  \olditemize
  \setlength{\itemsep}{1pt}
  \setlength{\parskip}{0pt}
  \setlength{\parsep}{0pt}
}
\newtheorem{proposition}{Proposition}
\newtheorem*{proposition*}{Proposition}
\newtheorem{lemma}{Lemma}
\newtheorem*{lemma*}{Lemma}
\newtheorem{corollary}{Corollary}
\newtheorem*{corollary*}{Corollary}
\theoremstyle{definition}
\renewcommand{\ALG@beginalgorithmic}{\footnotesize}
\def\F{Fig.~}
\def\E{\mathbb{E}}
\newcommand{\ar}[3]{} %% something bibtex is doing - ignore it
\newcommand{\heading}[1]{{\vspace{0pt}\noindent\bf{#1}}} % inside section
 \gdef\xxxmark{%
   \expandafter\ifx\csname @mpargs\endcsname\relax % in minipage?
     \expandafter\ifx\csname @captype\endcsname\relax % in figure/caption?
       \marginpar{\textcolor{red}{xxx~}}% not in a caption or minipage, can use marginpar
     \else
       \textcolor{red}{xxx~}% notice trailing space
     \fi
   \else
     \textcolor{red}{xxx~}% notice trailing space
   \fi}
 \gdef\xxx{\@ifnextchar[\xxx@lab\xxx@nolab}
 \long\gdef\xxx@lab[#1]#2{{\bf [\xxxmark \textcolor{red}{#2} ---{\sc #1}]}}
 \long\gdef\xxx@nolab#1{{\bf [\xxxmark \textcolor{red}{#1}]}}
   \long\gdef\xxx@lab[#1]#2{}\long\gdef\xxx@nolab#1{}
 \gdef\edit{\@ifnextchar[\edit@lab\edit@nolab}
 \long\gdef\edit@lab[#1]#2{[\textcolor{red}{#2} ---{\sc #1}]}
 \long\gdef\edit@nolab#1{[\textcolor{red}{#1}]}
   \long\gdef\edit@lab[#1]#2{[#2]}
\newcommand{\ignore}[1]{}
\definecolor{grey}{rgb}{0.5,0.5,0.5}
\bf\color{black},
\def\pixeldp{PixelDP\xspace}
\def\Lname{construction attack bound\xspace}
\def\L{L\xspace}
\def\Tname{prediction robustness threshold\xspace}
\def\T{T\xspace}
\def\Lattackname{empirical attack bound\xspace}
\begin{document}

\title{Certified Robustness to Adversarial Examples with Differential Privacy}

\ifnum\isAnonymousSubmission=0
\author{ {\rm Mathias Lecuyer, Vaggelis Atlidakis, Roxana Geambasu, Daniel Hsu, and Suman Jana} \\
         Columbia University
       }
\else
  \author{ {} }
\fi

\maketitle

\thispagestyle{plain}
\pagestyle{plain}

% -------------------- %
\begin{abstract}
Adversarial examples that fool machine learning models, particularly deep neural networks, have been a topic of intense research interest, with attacks and defenses being developed in a tight back-and-forth.
Most past defenses are {\em best effort} and have been shown to be vulnerable to sophisticated attacks.
Recently a set of {\em certified defenses} have been introduced, which provide guarantees of robustness to norm-bounded attacks. However these defenses either do not scale to large datasets or are limited in the types of models they can support.
This paper presents the first certified defense that both scales to large networks and datasets (such as Google's Inception network for ImageNet) and applies broadly to arbitrary model types.
Our defense, called {\em \pixeldp}, is based on a novel connection between robustness against adversarial examples and differential privacy, a cryptographically-inspired privacy formalism, that provides a rigorous, generic, and flexible foundation for defense.
\end{abstract}

\section{Introduction}
\label{sec:introduction}

Deep neural networks (DNNs) perform exceptionally well on many machine learning tasks, including safety- and security-sensitive applications such as self-driving cars~\cite{DBLP:journals/corr/BojarskiTDFFGJM16}, malware classification~\cite{pascanu2015malware}, face recognition~\cite{parkhi2015deep}, and critical infrastructure~\cite{Zohrevand:2017:DLB:3132847.3133031}.
Robustness against malicious behavior is important in many of these applications, yet in recent years it has become clear that DNNs are vulnerable to a broad range of attacks.
Among these attacks -- broadly surveyed in~\cite{sok-towards-the-science-of-security-and-privacy} -- are {\em adversarial examples}:  the adversary finds small perturbations to correctly classified inputs that cause a DNN to produce an erroneous prediction, possibly of the adversary's choosing~\cite{szegedy2013intriguing}.
Adversarial examples pose serious threats to security-critical applications.
A classic example is an adversary attaching a small, human-imperceptible sticker onto a stop sign that causes a self-driving car to recognize it as a yield sign.
Adversarial examples have also been demonstrated in domains such as reinforcement learning~\cite{kos2017delving} and generative models~\cite{kos2017adversarial}.

Since the initial demonstration of adversarial examples~\cite{szegedy2013intriguing}, numerous attacks and defenses have been proposed, each building on one another.
Initially, most defenses used {\em best-effort} approaches and were broken soon after introduction.
Model distillation, proposed as a robust defense in~\cite{papernot2016distillation}, was subsequently broken in~\cite{carlini-attacks}. Other work~\cite{lu2017no} claimed that adversarial examples are unlikely to fool machine learning (ML) models in the real-world, due to the rotation and scaling introduced by even the slightest camera movements.
However, \cite{athalye2017synthesizing} demonstrated a new attack strategy that is robust to rotation and scaling.
While this back-and-forth has advanced the state of the art, recently the community has started to recognize that rigorous, theory-backed, defensive approaches are required to put us off this arms race.

Accordingly, a new set of {\em certified defenses} have emerged over the past year, that provide rigorous guarantees of robustness against norm-bounded attacks~\cite{pmlr-v70-cisse17a, raghunathan2018certified, wong2018provable}.
These works alter the learning methods to both optimize for robustness against attack at training time and permit provable robustness checks at inference time.
At present, these methods tend to be tied to internal network details, such as the type of activation functions and the network architecture. They struggle to generalize across different types of DNNs and have only been evaluated on small networks and datasets.

We propose a new and orthogonal approach to certified robustness against adversarial examples that is {\em broadly applicable, generic, and scalable}.
We observe for the first time a connection between {\em differential privacy} (DP), a cryptography-inspired formalism, and a definition of robustness against norm-bounded adversarial examples in ML.
We leverage this connection to develop {\em \pixeldp}, the first certified defense we are aware of that both scales to large networks and datasets (such as Google's Inception network trained on ImageNet) and can be adapted broadly to arbitrary DNN architectures.
Our approach can even be incorporated with no structural changes in the target network (e.g., through a separate auto-encoder as described in Section~\ref{sec:noise-layer}).
We provide a brief overview of our approach below along with the section references that detail the corresponding parts.

\S\ref{sec:dp-robustness-connection} establishes the DP-robustness connection formally (our first contribution). To give the intuition, DP is a framework for randomizing computations running on databases such that a small change in the database (removing or altering one row or a small set of rows) is guaranteed to result in a bounded change in the distribution over the algorithm's outputs.
Separately, robustness against adversarial examples can be defined as ensuring that small changes in the input of an ML predictor (such as changing a few pixels in an image in the case of an $\l_0$-norm attack) will not result in drastic changes to its predictions (such as changing its label from a stop to a yield sign).
Thus, if we think of a DNN's inputs (e.g., images) as databases in DP parlance, and individual features (e.g., pixels) as rows in DP, we observe that randomizing the outputs of a DNN's prediction function to enforce DP on a small number of pixels in an image {\em guarantees} robustness of predictions against adversarial examples that can change up to that number of pixels.
The connection can be expanded to standard attack norms, including $\l_1$, $\l_2$, and $l_\infty$ norms.

\S\ref{sec:design} describes {\em \pixeldp}, the first certified defense against norm-bounded adversarial examples based on differential privacy (our second contribution).
Incorporating DP into the learning procedure to increase robustness to adversarial examples requires is completely different and orthogonal to using DP to preserve the privacy of the training set, the focus of prior DP ML literature~\cite{DBLP:conf/kdd/McSherryM09,2016arXiv160700133A,Chaudhuri:2011:DPE:1953048.2021036} (as \S~\ref{sec:related-work} explains).
A \pixeldp DNN includes in its architecture a {\em DP noise layer} that randomizes the network's computation, to enforce DP bounds on how much the distribution over its predictions can change with small, norm-bounded changes in the input.
At inference time, we leverage these DP bounds to implement a certified robustness check for individual predictions.
Passing the check for a given input {\em guarantees} that no perturbation exists up to a particular size that causes the network to change its prediction.
The robustness certificate can be used to either act exclusively on robust predictions, or to lower-bound the network's accuracy under attack on a test set.

\S\ref{sec:evaluation} presents the first experimental evaluation of a certified adversarial-examples defense for the Inception network trained on the ImageNet dataset (our third contribution).
We additionally evaluate \pixeldp on various network architectures for four other datasets (CIFAR-10, CIFAR-100, SVHN, MNIST), on which previous defenses -- both best effort and certified -- are usually evaluated.
Our results indicate that \pixeldp is (1) as effective at defending against attacks as today's state-of-the-art, best-effort defense~\cite{madry} and (2) more scalable and broadly applicable than a prior certified defense.

Our experience points to DP as a uniquely generic, broadly applicable, and flexible foundation for certified defense against norm-bounded adversarial examples (\S\ref{sec:analysis}, \S\ref{sec:related-work}).
We credit these properties to the {\em post-processing property of DP}, which lets us incorporate the certified defense in a network-agnostic way.

\section{DP-Robustness Connection}
\label{sec:dp-robustness-connection}

\subsection{Adversarial ML Background}
\label{sec:ml-background}

An ML model can be viewed as a function mapping inputs -- typically a vector of
numerical feature values -- to an output (a label for multiclass classification and a real number for regression).
Focusing on multiclass classification, we define a
{\em model} as a function $f \colon \mathbb{R}^n \rightarrow \mathcal{K}$ that
maps $n$-dimensional inputs to a label in the set $\mathcal{K} = \{1, \ldots,
K\}$ of all possible labels.
Such models typically map an input $x$ to a vector of scores $y(x) = (y_1(x),
\dotsc, y_K(x))$, such that $y_k(x) \in [0,1]$ and $\sum_{k=1}^K y_k(x) = 1$.
These scores are interpreted as a probability distribution over the labels, and
the model returns the label with highest probability, i.e., $f(x) = \arg\max_{k \in
\mathcal{K}} y_k(x)$.
We denote the function that maps input $x$ to $y$ as $Q$ and call it the {\em scoring function}; we denote the function that gives the ultimate prediction for input $x$ as $f$ and call it the {\em prediction procedure}.

\heading{Adversarial Examples.}
Adversarial examples are a class of attack against ML models, studied particularly on deep neural networks for multiclass image classification.
The attacker constructs a small change to a given, fixed input, that wildly changes the predicted output.
Notationally, if the input is $x$, we denote an adversarial version of that input by $x+\alpha$, where $\alpha$ is the change or perturbation introduced by the attacker.
When $x$ is a vector of pixels (for images), then $x_i$ is the $i$'th pixel in the image and $\alpha_i$ is the change to the $i$'th pixel.

It is natural to constrain the amount of change an attacker is allowed to make to the input, and often this is measured by the $p$-norm of the change, denoted by $\|\alpha\|_p$.
For $1 \leq p < \infty$, the $p$-norm of $\alpha$ is defined by $\|\alpha\|_p = (\sum_{i=1}^n |\alpha_i|^p )^{1/p}$; for $p=\infty$, it is $\|\alpha\|_\infty = \max_i |\alpha_i|$. Also commonly used is the $0$-norm (which is technically not a norm): $\|\alpha\|_0 = |\{ i : \alpha_i \neq 0 \}|$.
A small $0$-norm attack is permitted to arbitrarily change a few entries of the
input; for example, an attack on the image recognition system for self-driving
cars based on putting a sticker in the field of vision is such an
attack~\cite{physical-world-attacks-dawnsong}.
Small $p$-norm attacks for larger values of $p$ (including $p=\infty$) require the changes to the pixels to be small in an aggregate sense, but the changes may be spread out over many or all features.
A change in the lighting condition of an image may correspond to such an attack~\cite{physical-world-attacks-goodfellow, pei2017deepxplore}.
The latter attacks are generally considered more powerful, as they can easily remain invisible to human observers.
Other attacks that are not amenable to norm bounding exist~\cite{DBLP:journals/corr/abs-1801-02612, DBLP:journals/corr/abs-1805-07894, DBLP:journals/corr/abs-1801-02610}, but this paper deals exclusively with norm-bounded attacks. 

Let $B_p(r) := \{ \alpha \in \mathbb{R}^n : \|\alpha\|_p \leq r \}$ be the $p$-norm ball of radius $r$.
For a given classification model, $f$, and a fixed input, $x \in \mathbb{R}^n$, an
attacker is able to craft a successful adversarial example of size $L$ for a given
$p$-norm if they find $\alpha \in B_p(L)$ such that
$f(x + \alpha) \neq f(x)$.
The attacker thus tries to find a small change to $x$ that will change the predicted label.

\heading{Robustness Definition.}
Intuitively, a predictive model may be regarded as {\em robust} to adversarial examples if its output is {\em insensitive} to small changes to any {\em plausible} input that may be encountered in deployment.
To formalize this notion, we must first establish what qualifies as a {\em plausible} input.  This is difficult: the adversarial examples literature has not settled on such a definition. Instead, model robustness is typically assessed on inputs from a test set that are not used in model training -- similar to how accuracy is assessed on a test set and not a property on all plausible inputs.
We adopt this view of robustness.

Next, given an input, we must establish a definition for {\em insensitivity} to small changes to the input. We say a model $f$ is insensitive, or \emph{robust}, to attacks of $p$-norm $L$ on a given input $x$ if $f(x) = f(x + \alpha)$ for all $\alpha \in B_p(L)$.
If $f$ is a multiclass classification model based on label scores (as in \S\ref{sec:ml-background}), this is equivalent to:
\begin{multline} \label{eq:robustness}
  \forall \alpha \in B_p(L) \centerdot y_{k}(x + \alpha) > \max_{i : i \neq k} y_i(x + \alpha) ,
\end{multline}
where $k := f(x)$.
A small change in the input does not alter the scores so much as to change the predicted label.

\subsection{DP Background}
\label{sec:dp-background}

DP is concerned with whether the output of a computation over a database can reveal information about individual records in the database. To prevent such information leakage, randomness is introduced into the computation to hide details of individual records.

A randomized algorithm $A$ that takes as input a database $d$ and outputs a value in a space $O$ is said to satisfy \emph{$(\epsilon,\delta)$-DP with respect to a metric $\rho$ over databases} if, for any databases $d$ and $d'$ with $\rho(d,d') \leq 1$, and for any subset of possible outputs $S \subseteq O$, we have
\begin{equation} \label{eq:dp}
  P(A(d) \in S) \leq e^\epsilon P(A(d') \in S) + \delta .
\end{equation}
Here, $\epsilon>0$ and $\delta \in [0,1]$ are parameters that quantify the strength of the privacy guarantee.
In the standard DP definition, the metric $\rho$ is the Hamming metric, which simply counts the number of entries that differ in the two databases.
For small $\epsilon$ and $\delta$, the standard $(\epsilon,\delta)$-DP guarantee implies that changing a single entry in the database cannot change the output distribution very much.
DP also applies to general metrics $\rho$~\cite{chatzikokolakis2013broadening}, including $p$-norms relevant to norm-based adversarial examples.

Our approach relies on two key properties of DP.  First is the well-known {\em post-processing property}: any computation applied to the output of an $(\epsilon,\delta)$-DP algorithm remains $(\epsilon,\delta)$-DP.  Second is the {\em expected output stability property}, a rather obvious but not previously enunciated property that we prove in Lemma~\ref{lemma:expectation-bound}: the expected value of an $(\epsilon,\delta)$-DP algorithm with bounded output is not sensitive to small changes in the input.

\begin{lemma} \label{lemma:expectation-bound}
	{\bf (Expected Output Stability Bound)}
	Suppose a randomized function $A$, with bounded output $A(x) \in [0,b], \ b \in \mathbb{R}^+$, satisfies $(\epsilon,\delta)$-DP. Then the expected value of its output meets the following property:
	\[
	\forall \alpha \in B_p(1) \centerdot
	\E(A(x)) \leq e^\epsilon \E(A(x + \alpha)) + b\delta.
	\]
	The expectation is taken over the randomness in $A$.
\end{lemma}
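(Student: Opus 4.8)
The plan is to reduce the statement about expectations to the pointwise probabilistic guarantee furnished by $(\epsilon,\delta)$-DP, using the layer-cake (tail-integral) representation of the expectation of a nonnegative random variable. The key observation is that, under the $p$-norm metric $\rho$ on inputs, a perturbation $\alpha \in B_p(1)$ means exactly $\rho(x, x+\alpha) = \|\alpha\|_p \leq 1$, so $x$ and $x+\alpha$ are neighboring ``databases'' in the sense of \eqref{eq:dp} and the DP inequality applies to them directly.

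First I would recall that for a nonnegative random variable $Z$ one has $\E(Z) = \int_0^\infty P(Z > t)\,dt$. Since both $A(x)$ and $A(x+\alpha)$ take values in $[0,b]$, the tail probabilities vanish for $t \geq b$, so the integrals truncate to
\[
\E(A(x)) = \int_0^b P(A(x) > t)\,dt, \qquad \E(A(x+\alpha)) = \int_0^b P(A(x+\alpha) > t)\,dt.
\]
Next I would apply the DP guarantee threshold by threshold. For a fixed $t$, take the measurable output set $S = \{ o \in O : o > t \}$. Applying \eqref{eq:dp} to the neighboring inputs $x$ and $x+\alpha$ with this particular $S$ yields the pointwise bound
\[
P(A(x) > t) \leq e^\epsilon P(A(x+\alpha) > t) + \delta
\]
valid for every $t \in [0,b]$. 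Integrating this inequality over $t \in [0,b]$ and invoking linearity and monotonicity of the integral gives
\[
\E(A(x)) \leq e^\epsilon \int_0^b P(A(x+\alpha) > t)\,dt + \int_0^b \delta\,dt = e^\epsilon \E(A(x+\alpha)) + b\delta,
\]
which is precisely the claimed bound.

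There is no deep obstacle here; the argument is a clean transfer of the probabilistic DP guarantee through the expectation, and I expect the whole proof to fit in a few lines. The only points requiring care are (i) invoking the metric correspondence so that $x$ and $x+\alpha$ genuinely qualify as neighbors for \eqref{eq:dp}, and (ii) choosing the threshold sets $S = \{o : o > t\}$ so that the single-$t$ DP inequality aligns exactly with the tail-integral representation. The additive $b\delta$ term then emerges naturally as $\int_0^b \delta\,dt$, i.e. the constant $\delta$ integrated over the length-$b$ range of possible output values, which is where the boundedness hypothesis $A(x) \in [0,b]$ is essential.
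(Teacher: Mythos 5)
Your proof is correct and follows essentially the same route as the paper's: both use the tail-integral (layer-cake) representation $\E(A(x)) = \int_0^b P(A(x) > t)\,dt$, apply the $(\epsilon,\delta)$-DP inequality to each threshold set, and integrate to obtain the $e^\epsilon \E(A(x+\alpha)) + b\delta$ bound. Your write-up is merely a bit more explicit about the choice of $S = \{o : o > t\}$ and the metric condition making $x$ and $x+\alpha$ neighbors, which the paper leaves implicit.
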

\begin{proof}
	Consider any $\alpha \in B_p(1)$, and let $x' := x + \alpha$.
	We write the expected output as:
	\begin{align*}
	\E(A(x)) = \int_{0}^{b} P(A(x) > t) dt .
	\end{align*}
	We next apply Equation~\eqref{eq:dp} from the $(\epsilon, \delta)$-DP property:
	\begin{align*}
    \E(A(x)) \leq \ & e^\epsilon \Big( \int_{0}^{b} P(A(x') > t) dt \Big) + \int_{0}^{b}\delta dt \\
               =  \ & e^\epsilon \E(A(x')) + \int_{0}^{b}\delta dt.
	\end{align*}
	Since $\delta$ is a constant, $\int_{0}^{b}\delta dt = b\delta$.
\end{proof}

\subsection{DP-Robustness Connection}
\label{sec:actual-dp-robustness-connection}

The intuition behind using DP to provide robustness to adversarial examples is to create a {\em DP scoring function} such that, given an input example, the predictions are DP with regards to the features of the input (e.g. the pixels of an image).
In this setting, we can derive stability bounds for the expected output of the DP function using Lemma~\ref{lemma:expectation-bound}.  The bounds, combined with Equation~\eqref{eq:robustness}, give a rigorous condition (or {\em certification}) for robustness to adversarial examples.

Formally, regard the feature values (e.g., pixels) of an input $x$ as the records in a database, and consider a randomized scoring function $A$ that, on input $x$, outputs scores $(y_1(x), \dotsc, y_K(x))$ (with $y_k(x) \in [0,1]$ and $\sum_{k=1}^K y_k(x) = 1$).
We say that $A$ is an \emph{($\epsilon,\delta)$-pixel-level differentially private} (or \emph{($\epsilon,\delta$)-\pixeldp}) function if it satisfies $(\epsilon,\delta)$-DP (for a given metric). This is formally equivalent to the standard definition of DP, but we use this terminology to emphasize the context in which we apply the definition, which is fundamentally different than the context in which DP is traditionally applied in ML (see \S\ref{sec:related-work} for distinction).

Lemma~\ref{lemma:expectation-bound} directly implies bounds on the expected outcome on an $(\epsilon,\delta)$-\pixeldp scoring function:
\begin{corollary}
Suppose a randomized function $A$ satisfies $(\epsilon,\delta)$-\pixeldp with respect to a $p$-norm metric, and where $A(x) = (y_1(x), \dotsc, y_K(x)), \ y_k(x) \in [0,1]$:
  \begin{equation} \label{eq:expectation-bound}
  \forall k, \forall \alpha \in B_p(1) \centerdot
  \E(y_k(x)) \leq e^\epsilon \E(y_k(x + \alpha)) + \delta.
  \end{equation}
\end{corollary}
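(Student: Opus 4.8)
The plan is to reduce this vector-valued statement to $K$ separate scalar applications of Lemma~\ref{lemma:expectation-bound}, one per label. The corollary differs from the lemma only in that $A$ now returns a probability vector $(y_1(x), \dotsc, y_K(x))$ rather than a single bounded scalar, so all I have to do is isolate one score coordinate and check that it inherits both the DP guarantee and the boundedness hypothesis that Lemma~\ref{lemma:expectation-bound} requires.

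First I would fix an arbitrary label $k$ and consider the scalar randomized map $x \mapsto y_k(x)$, obtained by composing $A$ with the deterministic projection onto its $k$-th coordinate. Since coordinate selection is a deterministic function of $A$'s output, the post-processing property of DP guarantees that $x \mapsto y_k(x)$ is itself $(\epsilon,\delta)$-DP with respect to the same $p$-norm metric as $A$. This is the one place where I invoke a structural property of DP rather than manipulate the scores directly, and it is the conceptual crux of the argument, though it is immediate once stated.

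Next I would verify the boundedness hypothesis: because $y_k(x) \in [0,1]$ by assumption, the coordinate map has bounded output with $b = 1$. Lemma~\ref{lemma:expectation-bound} then applies verbatim to $x \mapsto y_k(x)$, yielding, for every $\alpha \in B_p(1)$,
\[
\E(y_k(x)) \leq e^\epsilon \E(y_k(x + \alpha)) + 1 \cdot \delta .
\]
Here the generic additive term $b\delta$ collapses to exactly $\delta$ precisely because $b = 1$. Since the label $k$ was arbitrary, the bound holds for all $k$, which is the claimed statement.

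I do not anticipate a genuine obstacle. The only points requiring care are citing the post-processing property correctly (so that restricting attention to a single output coordinate preserves $(\epsilon,\delta)$-DP) and tracking that substituting $b = 1$ is what turns the lemma's $b\delta$ into the additive $\delta$ seen in Equation~\eqref{eq:expectation-bound}. No new estimates or integral manipulations beyond those already carried out in the proof of Lemma~\ref{lemma:expectation-bound} are needed.
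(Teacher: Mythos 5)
Your proposal is correct and follows essentially the same route as the paper, whose entire proof is ``for any $k$ apply Lemma~\ref{lemma:expectation-bound} with $b=1$''; your explicit appeal to the post-processing property to justify that the coordinate map $x \mapsto y_k(x)$ inherits $(\epsilon,\delta)$-DP is a valid (and slightly more careful) way of making precise what the paper leaves implicit. No gaps.
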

\begin{proof}
  For any $k$ apply Lemma~\ref{lemma:expectation-bound} with $b=1$.
\end{proof}

\begin{figure*}[t]
	\centering
	\subfigure[\pixeldp DNN Architecture]{
		\includegraphics[width=0.62\textwidth]{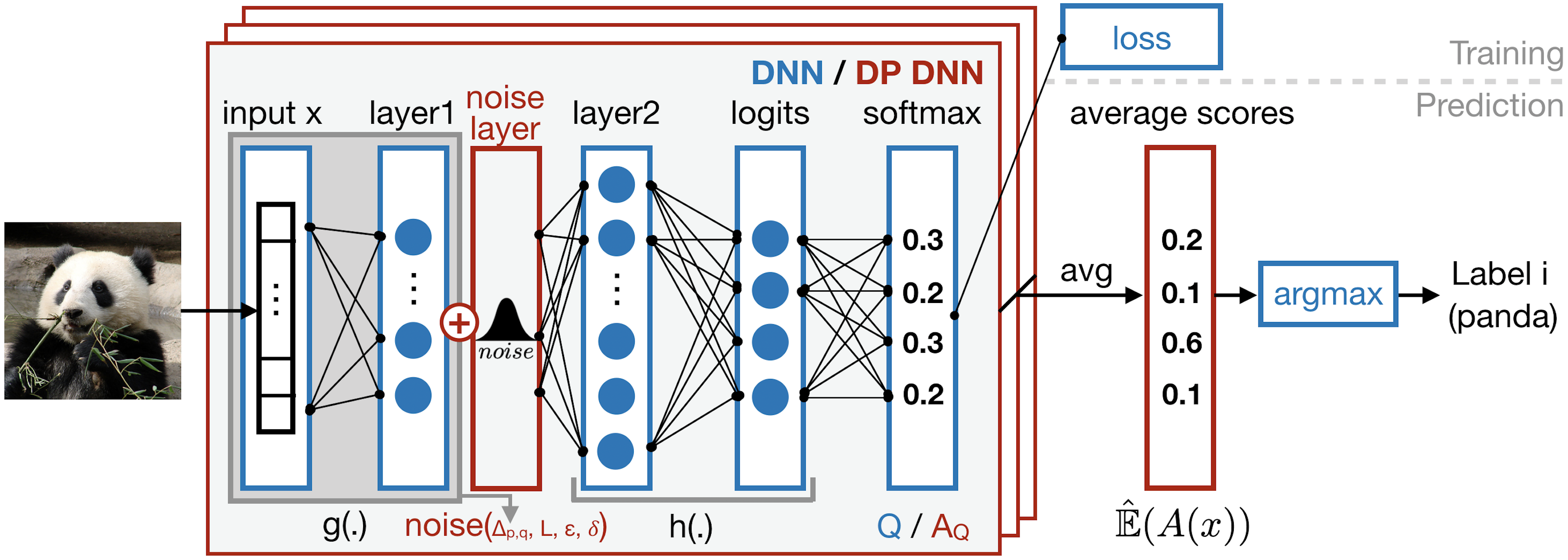}
		\label{fig:arch}
	}
	\hspace{10pt}
	% \hfill
	\subfigure[Robustness Test Example]{
		\includegraphics[width=0.21\textwidth]{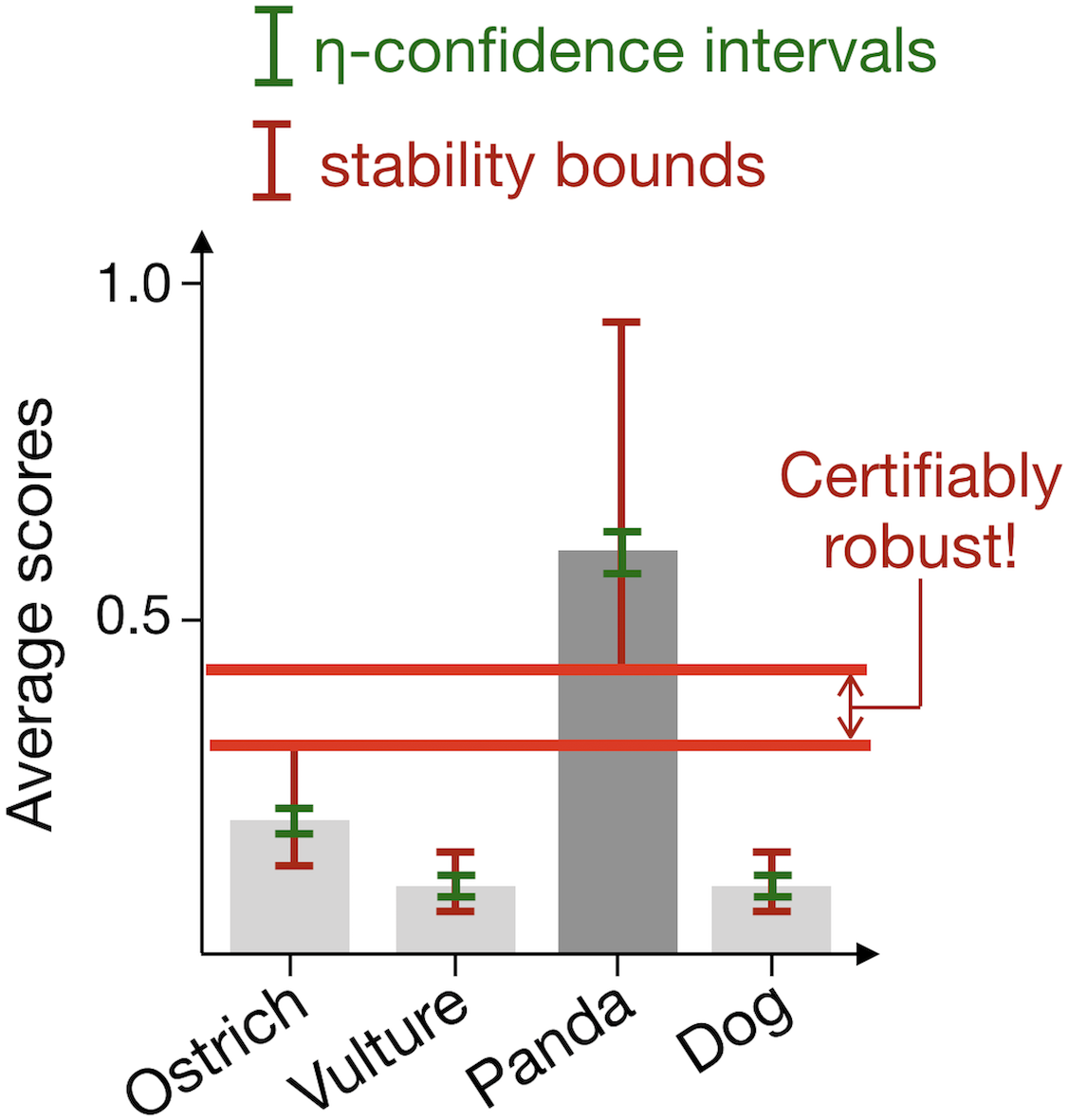}
		\label{fig:dp-bounds-ex}
	}
	\vspace{-5pt}
	\caption{
		\footnotesize
		\textbf{Architecture.}
		(a) In blue, the original DNN. In red, the noise layer that provides the
		$(\epsilon, \delta)$-DP guarantees. The noise can be added to the inputs
		{\em or} any of the following layers, but the distribution is rescaled
		by the sensitivity $\Delta_{p,q}$ of the computation performed by each layer before the
		noise layer.
		The DNN is trained with the original loss and optimizer (e.g., Momentum stochastic gradient descent).
		Predictions repeatedly call the $(\epsilon, \delta)$-DP DNN to measure its empirical expectation over the scores.
		(b) After adding the bounds for the measurement error between the empirical and true expectation (green) and the stability bounds from Lemma~\ref{lemma:expectation-bound} for a given attack size $L_{attack}$ (red), the prediction is certified robust to this attack size if the lower bound of the $\arg\max$ label does not overlap with the upper bound of any other labels.
	}
\vspace{-9pt}
\end{figure*}

Our approach is to transform a model's scoring function into a {\em randomized $(\epsilon, \delta)$-\pixeldp scoring function}, $A(x)$, and then have the model's prediction procedure, $f$, use A's {\em expected output over the DP noise}, $\E(A(x))$, as the label probability vector from which to pick the $\arg\max$.  I.e., $f(x) = \arg\max_{k \in \mathcal{K}} \E(A_k(x))$.
We prove that a model constructed this way allows the following robustness certification to adversarial examples:

\begin{proposition} \label{prop:robustness-condition}
{\bf (Robustness Condition)}
  Suppose $A$ satisfies $(\epsilon,\delta)$-\pixeldp with respect to a $p$-norm metric.  For any input $x$, if for some $k \in \mathcal{K}$,
  \begin{equation}\label{eq:robustness-condition}
    \E(A_k(x))
    > e^{2\epsilon} \max_{i : i \neq k} \E(A_i(x)) + (1+e^{\epsilon}) \delta ,
  \end{equation}
  then the multiclass classification model based on label probability vector $y(x)=(\E(A_1(x)),\dotsc,\E(A_K(x)))$ is robust to attacks $\alpha$ of size $\|\alpha\|_p \leq 1$ on input $x$.
\end{proposition}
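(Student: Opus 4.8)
The plan is to reduce the asserted robustness to the defining condition~\eqref{eq:robustness}. First note that, because $e^{2\epsilon} \geq 1$ and $\delta \geq 0$, the hypothesis~\eqref{eq:robustness-condition} already forces $\E(A_k(x)) > \E(A_i(x))$ for every $i \neq k$, so $k = f(x)$ is the label predicted at $x$. Hence it suffices to show that this same label remains the arg-max after any admissible perturbation, i.e.
\[
  \forall \alpha \in B_p(1) \centerdot \E(A_k(x+\alpha)) > \max_{i : i \neq k} \E(A_i(x+\alpha)).
\]
I would establish this by sandwiching each perturbed expected score between its unperturbed counterpart, applying the Corollary's bound~\eqref{eq:expectation-bound} twice in opposite directions.

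Fix any $\alpha \in B_p(1)$. Applying~\eqref{eq:expectation-bound} at $x$ with perturbation $\alpha$ gives $\E(A_k(x)) \leq e^\epsilon \E(A_k(x+\alpha)) + \delta$, which I rearrange into the lower bound $\E(A_k(x+\alpha)) \geq e^{-\epsilon}\big(\E(A_k(x)) - \delta\big)$. For the competing labels I use the symmetry of the norm ball: since $\|-\alpha\|_p = \|\alpha\|_p \leq 1$, I may instead apply~\eqref{eq:expectation-bound} at the input $x+\alpha$ with perturbation $-\alpha$, obtaining the upper bound $\E(A_i(x+\alpha)) \leq e^\epsilon \E(A_i(x)) + \delta$ for each $i \neq k$.

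Combining the two bounds, it is enough to verify that the lower bound on the winning label exceeds the upper bound on the best competitor, namely
\[
  e^{-\epsilon}\big(\E(A_k(x)) - \delta\big) > e^\epsilon \max_{i : i \neq k} \E(A_i(x)) + \delta.
\]
Multiplying through by $e^\epsilon$ and moving the $\delta$ terms to the right-hand side converts this into precisely the hypothesis~\eqref{eq:robustness-condition}. As $\alpha$ was an arbitrary element of $B_p(1)$, the desired robustness follows.

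I expect the one genuinely non-routine step to be the second, reversed application of~\eqref{eq:expectation-bound} at $x+\alpha$ with perturbation $-\alpha$, justified by the symmetry of the $p$-norm ball. This is what yields two-sided control of the scores and, after chaining the two resulting factors of $e^\epsilon$, explains both the $e^{2\epsilon}$ coefficient and the $(1+e^\epsilon)\delta$ additive slack in the statement; everything else is elementary rearrangement.
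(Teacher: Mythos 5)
Your proposal is correct and follows essentially the same route as the paper's proof: your two bounds are exactly the paper's inequalities (a) and (b), and your final rearrangement reproduces the paper's chain of inequalities deriving $\E(A_k(x+\alpha)) > \max_{i \neq k}\E(A_i(x+\alpha))$ from the hypothesis. The only difference is cosmetic --- you explicitly justify the reversed application of Equation~\eqref{eq:expectation-bound} at $x+\alpha$ with perturbation $-\alpha$ via symmetry of the norm ball, a step the paper uses implicitly when stating its bound (b).
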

\begin{proof}
  Consider any $\alpha \in B_p(1)$, and let $x' := x + \alpha$.
  From Equation~\eqref{eq:expectation-bound}, we have:
  \begin{align*}
    \E(A_k(x)) & \leq e^{\epsilon}  \E(A_k(x')) + \delta , & (a)\\
    \E(A_i(x')) & \leq e^{\epsilon} \E(A_i(x)) + \delta , \quad i \neq k . & (b)
  \end{align*}
  Equation (a) gives a lower-bound on $\E(A_k(x'))$; Equation (b) gives an upper-bound on $\max_{i \neq k} \E(A_i(x'))$.
  The hypothesis in the proposition statement (Equation~\eqref{eq:robustness-condition}) implies that the lower-bound of the expected score for label $k$ is strictly higher than the upper-bound for the expected score for any other label, which in turn implies the condition from Equation~\eqref{eq:robustness} for robustness at $x$.
  To spell it out:
  % $
  \begin{align*}
    \E(A_k(x'))
    \overset{\scalebox{.5}{Eq(a)}}{\geq} & \ \frac{\E(A_k(x))-\delta}{e^{\epsilon}} \\
    \overset{\scalebox{.5}{Eq(4)}}{>} & \ \frac{e^{2\epsilon} \max_{i : i \neq k} \E(A_i(x)) + (1+e^{\epsilon}) \delta - \delta}{e^\epsilon} \\
    = & \ e^{\epsilon} \max_{i : i \neq k} \E(A_i(x)) + \delta \\
    \overset{\scalebox{.5}{Eq(b)}}{\geq} & \max_{i : i \neq k} \E(A_i(x')) \\
    \implies \E(A_k(x')) > & \ \max_{i : i \neq k} \E(A_i(x + \alpha)) \ \forall \alpha \in B_p(1) ,
  \end{align*}
  the very definition of robustness at $x$ (Equation~\eqref{eq:robustness}).
\end{proof}

The preceding certification test is {\em exact} regardless of the value of the $\delta$ parameter of differential privacy: there is no failure probability in this test.
The test applies only to attacks of $p$-norm size of $1$, however all preceding results generalize to {\em attacks of $p$-norm size $L$}, i.e., when $\|\alpha\|_p \leq L$, by applying group privacy~\cite{dwork2014algorithmic}.
The next section shows how to apply group privacy (\S\ref{sec:noise-layer}) and generalize the certification test to make it practical (\S\ref{sec:prediction}).

\section{\pixeldp Certified Defense}
\label{sec:design}

\subsection{Architecture}
\label{sec:overview}

\pixeldp is a certified defense against $p$-norm bounded adversarial example attacks built on the preceding DP-robustness connection. \F\ref{fig:arch} shows an example PixelDP DNN architecture for multi-class image classification.
The original architecture is shown in blue; the changes introduced to make it PixelDP are shown in red.
Denote $Q$ the original DNN's scoring function; it is a deterministic map from images $x$ to a probability distribution over the $K$ labels $Q(x) = (y_1(x),\dotsc,y_K(x))$.
The vulnerability to adversarial examples stems from the unbounded sensitivity of $Q$ with respect to $p$-norm changes in the input.
Making the DNN $(\epsilon,\delta)$-\pixeldp involves adding {\em calibrated noise} to turn $Q$ into an $(\epsilon,\delta)$-DP randomized function $A_Q$; the expected output of that function will have bounded sensitivity to $p$-norm changes in the input.
We achieve this by introducing a \emph{noise layer} (shown in red in \F\ref{fig:arch}) that adds zero-mean noise to the output of the layer preceding it (layer1 in \F\ref{fig:arch}).
The noise is drawn from a Laplace or Gaussian distribution and its standard deviation is proportional to: (1) $L$, the $p$-norm attack bound for which we are constructing the network and (2) $\Delta$, the sensitivity of the pre-noise computation (the grey box in \F\ref{fig:arch}) with respect to $p$-norm input changes.

{\em Training} an $(\epsilon,\delta)$-\pixeldp network is similar to training the original network: we use the original loss and optimizer, such as stochastic gradient descent.
The major difference is that we alter the pre-noise computation to {\em constrain its sensitivity} with regards to $p$-norm input changes.
Denote $Q(x) = h(g(x))$, where $g$ is the pre-noise computation and $h$ is the subsequent computation that produces $Q(x)$ in the original network.
We leverage known techniques, reviewed in \S\ref{sec:training}, to transform $g$ into another function, $\tilde g$, that has a fixed sensitivity ($\Delta$) to $p$-norm input changes.
We then add the noise layer to the output of $\tilde g$, with a standard deviation scaled by $\Delta$ and $L$ to ensure $(\epsilon,\delta)$-\pixeldp for $p$-norm changes of size $L$.
Denote the resulting scoring function of the \pixeldp network: $A_{Q}(x) = h(\tilde g(x)+noise(\Delta, L, \epsilon, \delta))$, where $noise(.)$ is the function implementing the Laplace/Gaussian draw.
Assuming that the noise layer is placed such that $h$ only processes the DP output of $\tilde g(x)$ without accessing $x$ again (i.e., no skip layers exist from pre-noise to post-noise computation), the post-processing property of DP ensures that $A_{Q}(x)$ also satisfies $(\epsilon,\delta)$-\pixeldp for $p$-norm changes of size $L$.

{\em Prediction} on the $(\epsilon,\delta)$-\pixeldp scoring function, $A_{Q}(x)$, affords the robustness certification in Proposition~\ref{prop:robustness-condition} if the prediction procedure uses the {\em expected scores}, $\mathbb{E}(A_{Q}(x))$, to select the winning label for any input $x$.
Unfortunately, due to the potentially complex nature of the post-noise computation, $h$, we cannot compute this output expectation analytically.
We therefore resort to Monte Carlo methods to estimate it at prediction time and develop an approximate version of the robustness certification in Proposition~\ref{prop:robustness-condition} that uses standard techniques from probability theory to account for the estimation error (\S\ref{sec:prediction}).
Specifically, given input $x$, \pixeldp's prediction procedure invokes $A_{Q}(x)$ multiple times with new draws of the noise layer.
It then averages the results for each label, thereby computing an estimation $\hat\E(A_{Q}(x))$ of the expected score $\E(A_{Q}(x))$.
It then computes an $\eta$-confidence interval for $\hat\E(A_{Q}(x))$ that holds with probability $\eta$.
Finally, it integrates this confidence interval into the stability bound for the expectation of a DP computation (Lemma~\ref{lemma:expectation-bound}) to obtain $\eta$-confidence upper and lower bounds on the change an adversary can make to the average score of any label with a $p$-norm input change of size up to $L$.
\F\ref{fig:dp-bounds-ex} illustrates the upper and lower bounds applied to the average score of each label by the \pixeldp prediction procedure.
If the lower bound for the label with the top average score is strictly greater than the upper bound for every other label, then, with probability $\eta$, the \pixeldp network's prediction for input $x$ is {\em robust} to arbitrary attacks of $p$-norm size $L$.
The failure probability of this robustness certification, $1-\eta$, can be made arbitrarily small by increasing the number of invocations of $A_{Q}(x)$.

One can use \pixeldp's certification check in two ways: (1) one can decide only to actuate on predictions that are deemed robust to attacks of a particular size; or (2) one can compute, on a test set, a lower bound of a \pixeldp network's accuracy under $p$-norm bounded attack, independent of how the attack is implemented.
This bound, called {\em certified accuracy}, will hold no matter how effective future generations of the attack are.

The remainder of this section details the noise layer, training, and certified prediction procedures.
To simplify notation, we will henceforth use $A$ instead of $A_{Q}$.

\subsection{DP Noise Layer}
\label{sec:noise-layer}

The noise layer enforces $(\epsilon, \delta)$-\pixeldp by inserting noise inside the DNN using one of
two well-known DP mechanisms: the Laplacian and Gaussian mechanisms.
Both rely upon the {\em sensitivity} of the pre-noise layers (function $g$).
The sensitivity of a function $g$ is defined as the maximum change in output that can
be produced by a change in the input, given some distance metrics for the
input and output ($p$-norm and $q$-norm, respectively):
\[
  \Delta_{p,q} = \Delta_{p,q}^g = \max_{x,x':x \neq x'} \frac{\|g(x) - g(x')\|_q}{\|x - x'\|_p} .
\]

Assuming we can compute the sensitivity of the pre-noise layers (addressed shortly),
the noise layer leverages the Laplace and Gaussian mechanisms as follows.
On every invocation of the network on an input $x$ (whether for training or prediction)
the noise layer computes $g(x) + Z$, where the coordinates $Z = (Z_1,\dotsc,Z_m)$
are independent random variables from a noise distribution defined by the function $noise(\Delta, L, \epsilon, \delta)$.
\begin{itemize}
  \item Laplacian mechanism: $noise(\Delta, L, \epsilon, \delta)$ uses the Laplace distribution with mean zero and
  standard deviation $\sigma = \sqrt2\Delta_{p,1}L/\epsilon$; it gives $(\epsilon,0)$-DP.
  \item Gaussian mechanism: $noise(\Delta, L, \epsilon, \delta)$ uses the Gaussian distribution with mean zero and
  standard deviation $\sigma = \sqrt{2\ln(\frac{1.25}{\delta})} \Delta_{p,2}L/\epsilon$; it gives $(\epsilon,\delta)$-DP for $\epsilon \leq 1$.
\end{itemize}
Here, $L$ denotes the $p$-norm size of the attack against which the \pixeldp network provides $(\epsilon,\delta)$-DP; we call it the {\em \Lname}.
The noise formulas show that for a fixed noise standard deviation $\sigma$, the guarantee degrades gracefully: attacks twice as big halve the $\epsilon$ in the DP guarantee ($L \leftarrow 2L \Rightarrow \epsilon \leftarrow 2\epsilon$). This property is often referred as group privacy in the DP literature~\cite{dwork2014algorithmic}.

Computing the sensitivity of the pre-noise function $g$ depends on where we choose to place the noise layer in the DNN.
Because the post-processing property of DP carries the $(\epsilon, \delta)$-\pixeldp guarantee from the noise layer through the end of the network, a DNN designer has great flexibility in placing the noise layer anywhere in the DNN, as long as no skip connection exists from pre-noise to post-noise layers.
We discuss here several options for noise layer placement and how to compute sensitivity for each.
Our methods are not closely tied to particular network architectures and can therefore be applied on a wide variety of networks.

\heading{Option 1: Noise in the Image.}
The most straightforward placement of the noise layer is right after the input layer,
which is equivalent to adding noise to individual pixels of the image.
This case makes sensitivity analysis trivial: $g$ is the identity function, $\Delta_{1,1}=1$,
and $\Delta_{2,2}=1$.

\heading{Option 2: Noise after First Layer.}
Another option is to place the noise after the first hidden layer,
which is usually simple and standard for many DNNs.
For example, in image classification, networks often start with a convolution layer.
In other cases, DNNs start with fully connected layer.
These linear initial layers can be analyzed and their sensitivity computed as follows.

For linear layers, which consist of a linear operator with matrix form $W \in \mathbb{R}^{m,n}$,
the sensitivity is the matrix norm, defined as:
$\|W\|_{p,q} = \sup_{x : \|x\|_p \leq 1} \|Wx\|_q$.
Indeed, the definition and linearity of $W$ directly imply that $\frac{\|W x\|_q}{\|x\|_p} \leq \|W\|_{p,q}$, which means that:
  $\Delta_{p,q} = \|W\|_{p,q}$.
We use the following matrix norms~\cite{operator-norm}:
 $\|W\|_{1,1}$ is the maximum $1$-norm of $W$'s columns;
 $\|W\|_{1,2}$ is the maximum $2$-norm of $W$'s columns; and
 $\|W\|_{2,2}$ is the maximum singular value of $W$. For $\infty$-norm attacks,
 we need to bound $\|W\|_{\infty,1}$ or $\|W\|_{\infty,2}$, as our DP mechanisms
 require $q \in \{1,2\}$. However, tight bounds are computationally hard, so we
 currently use the following bounds: $\sqrt{n}\|W\|_{2,2}$ or
 $\sqrt{m}\|W\|_{\infty,\infty}$ where $\|W\|_{\infty,\infty}$ is the maximum
 $1$-norm of $W$'s rows.
 While these bounds are suboptimal and lead to results that are not as good as for
 $1$-norm or $2$-norm attacks, they allow us to include $\infty$-norm attacks
 in our frameworks.  We leave the study of better approximate bounds to future
 work.

For a convolution layer, which is linear but usually not expressed in matrix form, we reshape the input (e.g. the image)
as an $\mathbb{R}^{n d_{in}}$ vector, where
$n$ is the input size (e.g. number of pixels) and $d_{in}$ the number of input
channels (e.g. $3$ for the RGB channels of an image).
We write the convolution as an $\mathbb{R}^{n d_{out} \times n d_{in}}$ matrix where
each column has all filter maps corresponding to a given input channel, and zero values.
This way, a ``column'' of a convolution consists of all coefficients in the kernel that
correspond to a single input channel.
Reshaping the input does not change sensitivity.

\heading{Option 3: Noise Deeper in the Network.}
One can consider adding noise later in the network using the fact that when applying
two functions in a row $f_1(f_2(x))$ we have:
$\Delta^{(f_1 \circ f_2)}_{p,q} \leq \Delta^{(f_2)}_{p,r} \Delta^{(f_1)}_{r,q}$.
For instance, ReLU has a sensitivity of $1$ for $p, q \in \{1, 2, \infty\}$, hence a linear
layer followed by a ReLU has the same bound on the sensitivity as the linear layer alone.
However, we find that this approach for sensitivity analysis is difficult to generalize.
Combining bounds in this way leads to looser and looser approximations.
Moreover, layers such as batch normalization~\cite{ioffe2015batch}, which are popular in image
classification networks, do not appear amenable to such bounds (indeed, they are assumed away by some previous defenses~\cite{pmlr-v70-cisse17a}).
Thus, our general recommendation is to add the DP noise layer early in the network --
where bounding the sensitivity is easy -- and taking advantage of DP's
post-processing property to carry the sensitivity bound through the end of the
network.

\heading{Option 4: Noise in Auto-encoder.}
Pushing this reasoning further, we uncover an interesting placement possibility
that underscores the broad applicability and flexibility of our approach: adding
noise ``before'' the DNN in a {\em separately trained auto-encoder}.  An
auto-encoder is a special form of DNN trained to predict its own input,
essentially learning the identity function $f(x) = x$.
Auto-encoders are typically used to de-noise inputs
\cite{Vincent:2010:SDA:1756006.1953039}, and are thus a good fit for \pixeldp.
Given an image dataset, we can train a $(\epsilon,\delta)$-\pixeldp auto-encoder using the previous noise layer options.  We stack it before the predictive DNN doing the classification and
fine-tune the predictive DNN by running a few training steps on the combined auto-encoder and DNN.
Thanks to the decidedly useful post-processing property of DP, the stacked DNN and
auto-encoder are $(\epsilon,\delta)$-PixelDP.

This approach has two advantages. First, the auto-encoder can be
developed independently of the DNN, separating the concerns of learning a good
\pixeldp model and a good predictive DNN.
Second, \pixeldp auto-encoders are much smaller than predictive DNNs, and are thus
much faster to train.
We leverage this property to train the first certified model
for the large ImageNet dataset, using an auto-encoder and the
{\em pre-trained} Inception-v3 model, a substantial relief in terms of experimental work
 (\S\ref{sec:evaluation-methodology}).

\subsection{Training Procedure}
\label{sec:training}

The soundness of \pixeldp's certifications rely only on enforcing DP at prediction time.
Theoretically, one could remove the noise layer during training.
However, doing so results in near-zero certified accuracy in our experience.
Unfortunately, training with noise anywhere except in the image itself raises a new challenge: left unchecked the training procedure will scale up the sensitivity of the pre-noise layers, voiding the DP guarantees.

To avoid this, we alter the pre-noise computation to keep its sensitivity constant (e.g. $\Delta_{p,q} \leq 1$) during training.
The specific technique we use depends on the type of sensitivity we need to bound, i.e. on the values of $p$ and $q$.
For $\Delta_{1,1}$, $\Delta_{1,2}$, or $\Delta_{\infty,\infty}$, we normalize
the columns, or rows, of linear layers and use the regular optimization process
with fixed noise variance.  For $\Delta_{2,2}$, we run the projection step
described in \cite{pmlr-v70-cisse17a} after each gradient step from the stochastic gradient descent (SGD).
This makes the pre-noise layers Parseval tight frames, enforcing $\Delta_{2,2} =
1$.  For the pre-noise layers, we thus alternate between an SGD step with fixed noise variance
and a projection step.
Subsequent layers from the original DNN are left unchanged.

It is important to note that during training, we optimize for {\em a single draw of noise} to predict the true label for a training example $x$.
We estimate $\mathbb{E}(A(x))$ using multiple draws of noise only at prediction time.
We can interpret this as pushing the DNN to increase the margin between the expected score for the true label versus others. Recall from Equation~\eqref{eq:robustness-condition} that the bounds on predicted outputs give robustness only when the true label has a large enough margin compared to other labels. By pushing the DNN to give high scores to the true label $k$ at points around $x$ likely under the noise distribution, we increase $\mathbb{E}(A_{k}(x))$ and decrease $\mathbb{E}(A_{i \neq k}(x))$.

\subsection{Certified Prediction Procedure}
\label{sec:prediction}

For a given input $x$, the prediction procedure in a traditional DNN chooses the $\arg\max$ label based on the score vector obtained from a single execution of the DNN's deterministic scoring function, $Q(x)$.
In a \pixeldp network, the prediction procedure differs in two ways.
First, it chooses the $\arg\max$ label based on a Monte Carlo estimation of the expected value of the randomized DNN's scoring function, $\mathbb{\hat E}(A(x))$.  This estimation is obtained by invoking $A(x)$ multiple times with independent draws in the noise layer.
Denote $a_{k,n}(x)$ the $n^{th}$ draw from the distribution of the randomized function $A$ on the $k^{th}$ label, given $x$ (so $a_{k,n}(x) \sim A_{k}(x)$).
In Lemma~\ref{lemma:expectation-bound} we replace $\mathbb{E}(A_k(x))$ with $\mathbb{\hat E}(A_k(x))=\frac{1}{n} \sum_{n} a_{k,n}(x)$, where $n$ is the number of invocations of $A(x)$.
We compute $\eta$-confidence error bounds to account for the estimation error in our robustness bounds, treating each label's score as a random variable in $[0,1]$. We use Hoeffding's inequality \cite{hoeffding1963probability} or Empirical Bernstein bounds \cite{DBLP:conf/colt/MaurerP09} to bound the error in $\mathbb{\hat E}(A(x))$.
We then apply a union bound so that the bounds for each label are all valid together.
For instance, using Hoeffding's inequality, with probability $\eta$,~{\small $\mathbb{\hat E}^{lb}(A(x)) \triangleq \mathbb{\hat E}(A(x)) - \sqrt{\frac{1}{2n}ln(\frac{2k}{1-\eta})} \leq \E(A(x)) \leq \mathbb{\hat E}(A(x)) + \sqrt{\frac{1}{2n}ln(\frac{2k}{1-\eta})} \triangleq \mathbb{\hat E}^{ub}(A(x))$}.

Second, \pixeldp returns not only the prediction for $x$ ($\arg\max(\mathbb{\hat
E}(A(x)))$) but also a {\em robustness size certificate} for that prediction.
To compute the certificate, we extend
Proposition~\ref{prop:robustness-condition} to account for the measurement
error:
\begin{proposition} \label{prop:general-robustness-condition}
 {\bf (Generalized Robustness Condition)}
  Suppose $A$ satisfies $(\epsilon,\delta)$-PixelDP with respect to changes of
  size $L$ in $p$-norm metric.
  Using the notation from Proposition~\ref{prop:robustness-condition} further let $\mathbb{\hat E}^{ub}(A_i(x))$ and $\mathbb{\hat E}^{lb}(A_i(x))$ be the $\eta$-confidence upper and lower bound, respectively, for the Monte Carlo estimate $\mathbb{\hat E}(A_i(x))$. For any input $x$, if for some $k \in \mathcal{K}$,
  \[
  \mathbb{\hat E}^{lb}(A_k(x))
  > e^{2\epsilon} \max_{i : i \neq k} \mathbb{\hat E}^{ub}(A_i(x)) + (1+e^{\epsilon}) \delta ,
  \]
  then the multiclass classification model based on label probabilities $(\mathbb{\hat E}(A_1(x)),\dotsc,\mathbb{\hat E}(A_K(x)))$ is robust to attacks of $p$-norm $L$ on input $x$ with probability $\geq\eta$.
\end{proposition}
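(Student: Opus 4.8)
The plan is to reduce the generalized condition to the exact condition of Proposition~\ref{prop:robustness-condition} by conditioning on the event that all $K$ confidence intervals are simultaneously valid. On that event the true expectations $\E(A_i(x))$ are sandwiched by the Monte Carlo bounds, so the hypothesis stated in terms of $\hat{\E}^{lb}$ and $\hat{\E}^{ub}$ implies the hypothesis of Proposition~\ref{prop:robustness-condition} stated in terms of the true expectations, from which robustness follows immediately. The only genuinely new ingredient relative to Proposition~\ref{prop:robustness-condition} is accounting for the estimation error, which is exactly what the probability $\eta$ absorbs.

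First I would make the confidence event precise. Each one-sided bound $\hat{\E}^{lb}(A_i(x)) \le \E(A_i(x)) \le \hat{\E}^{ub}(A_i(x))$ follows from Hoeffding's inequality or the Empirical Bernstein bound applied to the $n$ independent draws $a_{i,n}(x)\in[0,1]$; by calibrating each two-sided interval to fail with probability at most $(1-\eta)/K$ and taking a union bound over the $K$ labels, the event $\mathcal{G}$ that all $K$ intervals hold simultaneously satisfies $P(\mathcal{G})\ge\eta$. All subsequent reasoning is carried out on $\mathcal{G}$.

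Next, on $\mathcal{G}$ I would chain the inequalities. Using $\E(A_k(x))\ge\hat{\E}^{lb}(A_k(x))$ for the winning label and $\E(A_i(x))\le\hat{\E}^{ub}(A_i(x))$ for every competitor, the hypothesis gives
\[
\E(A_k(x)) \ge \hat{\E}^{lb}(A_k(x)) > e^{2\epsilon}\max_{i\ne k}\hat{\E}^{ub}(A_i(x)) + (1+e^{\epsilon})\delta \ge e^{2\epsilon}\max_{i\ne k}\E(A_i(x)) + (1+e^{\epsilon})\delta ,
\]
which is exactly Equation~\eqref{eq:robustness-condition} for the true expectations. Since $A$ is $(\epsilon,\delta)$-\pixeldp with respect to changes of size $L$ -- i.e., the unit of the DP metric is a $p$-norm ball of radius $L$ -- Proposition~\ref{prop:robustness-condition} applies verbatim with $B_p(1)$ reinterpreted as $B_p(L)$, yielding $\E(A_k(x+\alpha)) > \max_{i\ne k}\E(A_i(x+\alpha))$ for all $\|\alpha\|_p\le L$, i.e. Equation~\eqref{eq:robustness}. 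Note also that on $\mathcal{G}$ the same chain, together with $e^{2\epsilon}\ge 1$ and $\delta\ge 0$, forces $\arg\max_i\hat{\E}(A_i(x))=k$, so the label actually returned by the Monte Carlo prediction agrees with the robust label $f(x)=\arg\max_i\E(A_i(x))$ whose invariance Proposition~\ref{prop:robustness-condition} certifies.

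The step I expect to be most delicate is conceptual rather than computational: pinning down what ``robust with probability $\ge\eta$'' means and where the randomness lives. The robustness conclusion concerns the deterministic map $x\mapsto\arg\max_i\E(A_i(x))$ and is a statement about all attacks $\alpha\in B_p(L)$ at once; the probability $\eta$ is taken only over the Monte Carlo draws used to form the estimates at $x$, and merely bounds the chance that the certificate is issued on the basis of an invalid confidence interval. Keeping these two sources of uncertainty separate -- the adversary's worst-case choice of $\alpha$ versus the certifier's random sampling error -- is precisely what makes the union-bound calibration in the first step both necessary and sufficient.
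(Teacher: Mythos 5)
Your proof is correct and takes essentially the same route as the paper's own proof in Appendix~\ref{appendix:proposition-2-proof}: a union bound makes all $K$ confidence intervals valid simultaneously with probability at least $\eta$, on that event the empirical hypothesis transfers to the true expectations, and the size-$L$ (group-privacy) version of the stability bounds behind Proposition~\ref{prop:robustness-condition} yields the robustness condition of Equation~\eqref{eq:robustness}. If anything, your explicit good-event formulation --- and your remark separating the Monte Carlo sampling randomness from the worst-case quantification over $\alpha \in B_p(L)$ --- is tidier than the paper's appendix, which writes the DP stability inequalities directly on the empirical estimates $\mathbb{\hat E}$ rather than on $\E$.
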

The proof is similar to the one for Proposition~\ref{prop:robustness-condition} and is detailed in Appendix~\ref{appendix:proposition-2-proof}.
Note that the DP bounds are not probabilistic even for $\delta>0$; the failure probability $1-\eta$ comes from the Monte Carlo estimate and can be made arbitrarily small with more invocations of $A(x)$.

Thus far, we have described \pixeldp certificates as binary with respect to a fixed attack bound, $L$: we either meet or do not meet a robustness check for $L$.
In fact, our formalism allows for a more nuanced certificate, which gives the {\em maximum attack size $L_{max}$} (measured in $p$-norm) against which the prediction on input $x$ is guaranteed to be robust: no attack within this size from $x$ will be able to change the highest probability.  $L_{max}$ can differ for different inputs.
We compute the robustness size certificate for input $x$ as follows.
Recall from \ref{sec:noise-layer} that the DP mechanisms have a noise standard deviation $\sigma$ that grows in $\frac{\Delta_{p,q}L}{\epsilon}$.
For a given $\sigma$ used at prediction time, we solve for the maximum $L$ for which the robustness condition in Proposition~\ref{prop:general-robustness-condition} checks out:

\begin{footnotesize}
	\begin{tcolorbox}[width=\linewidth, halign=left, colframe=grey, colback=white, boxsep=0mm, arc=2mm, left=8pt,right=4pt,top=4pt,bottom=4pt]
		$L_{max} = \max_{L \in \mathbb{R}^+} L$ such that\\
		$\mathbb{\hat E}^{lb}(A_{k}(x)) > e^{2\epsilon} \mathbb{\hat E}^{ub}(A_{i:i \neq k}(x)) + (1 + e^{\epsilon}) \delta$ AND either
		\begin{itemize}
			\item $\sigma = \Delta_{p,1}L/\epsilon$ and $\delta = 0$ (for Laplace) OR
			\item $\sigma = \sqrt{2\ln(1.25/\delta)} \Delta_{p,2}L/\epsilon$ and $\epsilon \leq 1$ (for Gaussian).
		\end{itemize}
	\end{tcolorbox}
\end{footnotesize}

The prediction on $x$ is robust to attacks up to $L_{max}$, so we award a robustness size certificate of $L_{max}$ for $x$.

We envision two ways of using robustness size certifications. First, when it
makes sense to only take actions on the subset of robust predictions (e.g., a
human can intervene for the rest), an application can use \pixeldp's certified
robustness on each prediction.
Second, when all points must be classified, \pixeldp gives a lower bound on the
accuracy under attack. Like in regular ML, the testing set is used as a proxy for
the accuracy on new examples. We can certify the minimum accuracy under attacks up
to a threshold size \T, that we call the {\em \Tname}. \T is an inference-time
parameter that can differ from the {\em \Lname} parameter, \L, that is used to
configure the standard deviation of the DP noise.
In this setting the certification is computed only on the testing set,
and is not required for each prediction. We only need the
highest probability label, which requires fewer noise draws.
\S\ref{sec:evaluation-performance} shows that in practice a few hundred draws are sufficient to retain a large fraction of the certified predictions, while a few dozen are needed for simple predictions.

\section{Evaluation}
\label{sec:evaluation}

We evaluate \pixeldp by answering four key questions:
\begin{enumerate}
	\item[{\bf Q1:}] How does DP noise affect model accuracy?
	\item[{\bf Q2:}] What accuracy can \pixeldp certify?
	\item[{\bf Q3:}] What is \pixeldp's accuracy under attack and how does it compare to that of other best-effort and certified defenses?
	\item[{\bf Q4:}] What is \pixeldp's computational overhead?
\end{enumerate}
We answer these questions by evaluating \pixeldp on five standard image
classification datasets and networks -- both large and small -- and comparing it
with one prior certified defense~\cite{wong2018provable} and one best-effort
defense~\cite{madry}.
\S\ref{sec:evaluation-methodology} describes the datasets, prior defenses, and
our evaluation methodology; subsequent sections address each question in turn.

{\em Evaluation highlights}: \pixeldp provides meaningful certified robustness
bounds for reasonable degradation in model accuracy on all datasets and
DNNs.  To the best of our knowledge, these include the first
certified bounds for large,
complex datasets/networks such as the Inception network on ImageNet and Residual
Networks on CIFAR-10. There, \pixeldp gives $60$\% certified accuracy for 2-norm
attacks up to $0.1$ at the cost of $8.5$ and $9.2$ percentage-point accuracy
degradation respectively.
Comparing \pixeldp to the prior certified defense on smaller datasets,
\pixeldp models give higher accuracy on clean examples (e.g., $92.9$\% vs.
$79.6$\% accuracy SVHN dataset), and higher robustness to $2$-norm attacks (e.g.,
$55$\% vs. $17$\% accuracy on SVHN for 2-norm attacks of $0.5$), thanks to the
ability to scale to larger models.
Comparing \pixeldp to the best-effort defense on larger models and datasets,
\pixeldp matches its accuracy (e.g., $87$\% for \pixeldp vs.
$87.3$\% on CIFAR-10) and robustness to $2$-norm bounded attacks.

\subsection{Methodology}
\label{sec:evaluation-methodology}

\begin{table*}
	\footnotesize
\begin{minipage}{0.6\textwidth}
    \centering
    \begin{tabular}{p{1.92cm}p{1.1cm}p{0.9cm}p{0.85cm}p{0.7cm}p{1.5cm}p{1.1cm}}
    \hline
    \textbf{Dataset} & \textbf{Image size} & \textbf{Training set size}
        & \textbf{Testing set size} & \textbf{Target labels}
        & \textbf{Classifier architecture} & \textbf{Baseline accuracy}\\
    \hline
    {\bf ImageNet}~\cite{imagenet-dataset} & 299x299x3 & 1.4M &50K  & 1000
        & Inception V3  & 77.5\% \\
    \hline
    {\bf CIFAR-100}~\cite{cifar-dataset} & 32x32x3 & 50K &10K  & 100 & ResNet & 78.6\% \\
    \hline
    {\bf CIFAR-10}~\cite{cifar-dataset} & 32x32x3 & 50K &10K  & 10 & ResNet & 95.5\% \\
    \hline
    {\bf SVHN}~\cite{svhn-dataset}  & 32x32x3 & 73K &26K  & 10 & ResNet  & 96.3\% \\
    \hline
    {\bf MNIST}~\cite{mnist-dataset} & 28x28x1 & 60K &10K  & 10 & CNN  & 99.2\% \\
    \hline
    \end{tabular}
\vspace{-3pt}
\caption{\textbf{Evaluation datasets and baseline models.}
 Last column shows the accuracy of the baseline, undefended models. The datasets are sorted based on descending order of scale or complexity.}
\vspace{-7pt}
\label{table:datasets-models}
\end{minipage}
\hfill
\begin{minipage}{0.37\textwidth}
  \centering
\begin{tabular}{cccc}
\hline
  \textbf{$p$-norm} & \textbf{DP}        & \textbf{Noise}    & \textbf{Sensitivity} \\
  \textbf{used}     & \textbf{mechanism} & \textbf{location} & \textbf{approach}    \\
\hline
$1$-norm & Laplace   & $1^{st}$ conv. & $\Delta_{1,1}=1$ \\
\hline
$1$-norm & Gaussian  & $1^{st}$ conv. & $\Delta_{1,2}=1$ \\
\hline
$2$-norm & Gaussian  & $1^{st}$ conv. & $\Delta_{2,2} \leq 1$ \\
\hline
$1$-norm & Laplace   & Autoencoder    & $\Delta_{1,1}=1$ \\
\hline
$2$-norm & Gaussian  & Autoencoder    & $\Delta_{2,2} \leq 1$ \\
\hline
\end{tabular}
\vspace{-3pt}
\caption{\textbf{Noise layers in \pixeldp DNNs.} For each DNN,
we implement defenses for different attack bound norms and DP
mechanisms.}
\vspace{-7pt}
\label{table:pixeldp-models}
        \end{minipage}
\end{table*}

\heading{Datasets.}
We evaluate \pixeldp on image classification tasks from five pubic datasets
listed in Table~\ref{table:datasets-models}.
The datasets are listed in descending order of size and complexity for classification tasks.
MNIST~\cite{mnist-dataset} consists of greyscale handwritten digits and is the easiest to classify.
SVHN~\cite{svhn-dataset} contains small, real-world digit images cropped from Google Street View photos of house numbers.
CIFAR-10 and CIFAR-100~\cite{cifar-dataset}
consist of small color images that are each centered on one object of one of 10
or 100 classes, respectively.
ImageNet~\cite{imagenet-dataset} is a large, production-scale image dataset with over 1 million images spread across 1,000 classes.

\heading{Models: Baselines and PixelDP.}
We use existing DNN architectures to train a high-performing baseline model for each dataset.
Table~\ref{table:datasets-models} shows the accuracy of the baseline models.
We then make each of these networks \pixeldp with regards to $1$-norm and $2$-norm bounded attacks.
We also did rudimentary evaluation of $\infty$-norm bounded attacks, shown in Appendix~\ref{appendix:linf-attacks}. While the \pixeldp formalism can support $\infty$-norm attacks, our results show that tighter bounds are needed to achieve a practical defense.
We leave the development and evaluation of these bounds for future work.

Table~\ref{table:pixeldp-models} shows the \pixeldp configurations we used for the $1$-norm and $2$-norm defenses.
The code is available at \url{https://github.com/columbia/pixeldp}.
Since most of this section focuses on models with $2$-norm attack bounds, we detail only those configurations here.

{\em ImageNet:} We use as baseline a pre-trained version of Inception-v3~\cite{szegedy2016rethinking} available in
Tensorflow~\cite{inception-model}.
To make it \pixeldp, we use the autoencoder approach from
\S\ref{sec:noise-layer}, which does not require a full retraining of Inception
and was instrumental in our support of ImageNet.
The encoder has three convolutional layers and tied encoder/decoder weights.
The convolution kernels are $10\times10\times32$, $8\times8\times32$, and $5\times5\times64$, with stride $2$.
We make the autoencoder \pixeldp by adding the DP noise after the first convolution.
We then stack the baseline Inception-v3 on the \pixeldp autoencoder and fine-tune it for $20k$ steps, keeping the autoencoder weights constant.

{\em CIFAR-10}, {\em CIFAR-100}, {\em SVHN}:
We use the same baseline architecture, a state-of-the-art Residual Network (ResNet)~\cite{DBLP:journals/corr/ZagoruykoK16}.
Specifically we use the Tensorflow implementation of a 28-10 wide ResNet~\cite{tensorflow_wide_resnet_code}, with the default parameters.
To make it \pixeldp, we slightly alter the architecture to remove the image standardization step.
This step makes sensitivity input dependent, which is harder to deal with in \pixeldp.
Interestingly, removing this step also increases the baseline's own accuracy for all three datasets.
In this section, we therefore report the accuracy of the changed networks as baselines.

{\em MNIST}: We train a Convolutional Neural Network (CNN) with two $5\times5$
convolutions (stride $2$, $32$ and $64$ filters)
followed by a $1024$ nodes fully connected layer.

\heading{Evaluation Metrics.}
We use two accuracy metrics to evaluate \pixeldp models: {\em conventional accuracy} and {\em certified accuracy}.
Conventional accuracy (or simply accuracy) denotes the fraction of a testing set on which a model is correct; it is the standard accuracy metric used to evaluate any DNN, defended or not.
Certified accuracy denotes the fraction of the testing set on which a certified model's predictions are both {\em correct} and {\em certified robust} for a given \Tname; it has become a standard metric to evaluate models trained with {\em certified defenses}~\cite{wong2018provable,raghunathan2018certified,2018arXiv180510265D}.
We also use {\em precision on certified examples}, which measures the number of correct predictions exclusively on examples that are certified robust for a given \Tname.
Formally, the metrics are defined as follows:

\begin{enumerate}
\item {\em Conventional accuracy} $\frac{\sum_{i=1}^n isCorrect(x_i)}{n}$, where $n$ is the
testing set size and $isCorrect(x_i)$ denotes a function returning 1 if the prediction on
test sample $x_i$ returns the correct label, and 0 otherwise.

\item {\em Certified accuracy} \\ $\frac{\sum_{i=1}^n (isCorrect(x_i) \& robustSize(scores, \epsilon, \delta, L) \geq T)}{n}$,
where $robustSize(scores, \epsilon, \delta, L)$ returns the certified robustness size, which is
then compared to the \Tname \T.

\item {\em Precision on certified examples} $\frac{\sum_{i=1}^n (isCorrect(x_i) \& robustSize(p_i, \epsilon, \delta, L) \geq T))}{\sum_{i=1}^n robustSize(p_i, \epsilon, \delta, L) \geq T)}$.
\end{enumerate}

For $T=0$ all predictions are robust, so certified accuracy is equivalent to
conventional accuracy.
Each time we report $L$ or $T$, we use a $[0, 1]$ pixel range.

\heading{Attack Methodology.}
Certified accuracy -- as provided by \pixeldp and other certified defense --
constitutes a guaranteed lower-bound on accuracy under {\em any} norm-bounded attack.
However, the accuracy obtained in practice when faced with a specific attack can
be much better.
How much better depends on the attack, which we evaluate in two steps.
We first perform an attack on 1,000 randomly picked samples (as is customary in defense evaluation~\cite{madry}) from the testing set.
We then measure conventional accuracy on the attacked test examples.

For our evaluation, we use the state-of-the art attack from Carlini and
Wagner~\cite{carlini-attacks}, that we run for $9$ iterations of binary search,
$100$ gradient steps without early stopping (which we empirically validated to
be sufficient), and learning rate $0.01$.
We also adapt the attack to our specific defense following~\cite{obfuscated-gradients}:
since \pixeldp adds noise to the DNN, attacks based on optimization may fail due
to the high variance of gradients, which would not be a sign of the absence of
adversarial examples, but merely of them being harder to find.
We address this concern by averaging the gradients over $20$ noise draws at each
gradient step.
Appendix \S\ref{appendix:pgd-attack-details} contains more details about the
attack, including sanity checks and another attack we ran similar to the one
used in~\cite{madry}.

\heading{Prior Defenses for Comparison.}
We use two state-of-art defenses as comparisons.  First, we use the
empirical defense model provided by the Madry Lab for
CIFAR-10~\cite{madry-models}.
This model is developed in the context of $\infty$-norm attacks. It
uses an adversarial training strategy to approximately minimize the worst case error
under malicious samples~\cite{madry}.
While inspired by robust optmization theory, this methodology is best effort
(see \S\ref{sec:related-work}) and supports no formal notion of robustness for
individual predictions, as we do in \pixeldp.
However, the Madry model performs better under the latest attacks than other best-effort defenses (it is in fact the only one not yet broken)~\cite{obfuscated-gradients}, and represents a good comparison point.

Second, we compare with another approach for certified robustness
against $\infty$-norm attacks~\cite{wong2018provable}, based on
robust optimization. This method does not yet scale to the largest datasets
(e.g. ImageNet), or the more complex DNNs (e.g. ResNet, Inception) both for
computational reasons and because not all necessary layers are yet supported
(e.g. BatchNorm). We thus use their largest released model/dataset, namely a CNN
with two convolutions and a $100$ nodes fully connected layer for the SVHN
dataset, and compare their robustness guarantees with our own networks' robustness guarantees.
We call this SVHN CNN model {\em RobustOpt}.

\subsection{Impact of Noise (Q1)}
\label{sec:evaluation-accuracy}

\begin{table}
  \footnotesize
  \begin{tabular}{cccccc}
\hline
  \textbf{Dataset}     & \textbf{Baseline}  & \textbf{$L=0.03$} & \textbf{$L=0.1$} & \textbf{$L=0.3$} & \textbf{$L=1.0$}      \\
\hline
  \textbf{ImageNet}    & 77.5\%             &  --     & 68.3\% & 57.7\%  & 37.7\%   \\
\hline
  \textbf{CIFAR-10}    & 95.5\%             &  93.3\% & 87.0\% & 70.9\%  & 44.3\%   \\
\hline
  \textbf{CIFAR-100}   & 78.6\%             &  73.4\% & 62.4\% & 44.3\%  & 22.1\%   \\
\hline
\textbf{SVHN}          & 96.3\%             &  96.1\% & 93.1\% & 79.6\%  & 28.2\%   \\
\hline
  \textbf{MNIST}       & 99.2\%             &  99.1\% & 99.1\% & 98.2\%  & 11\%     \\
\hline
\end{tabular}
\vspace{-3pt}
\caption{\textbf{Impact of \pixeldp noise on conventional accuracy.} For each DNN, we show different levels
of construction attack size $L$. Conventional accuracy degrades with noise level.}
\vspace{-7pt}
\label{table:pixeldp-acc}
\end{table}

{\em Q1: How does DP noise affect the conventional accuracy of our models?}
To answer, for each dataset we train up to four $(1.0,0.05)$-\pixeldp DNN, for \Lname $\L \in \{0.03, 0.1, 0.3, 1\}$.
Higher values of $L$ correspond to robustness against larger attacks and larger noise standard deviation $\sigma$.

\begin{figure*}[t]
	\centering
	\subfigure[ImageNet Certified Accuracy]{
		\includegraphics[width=0.47\textwidth]{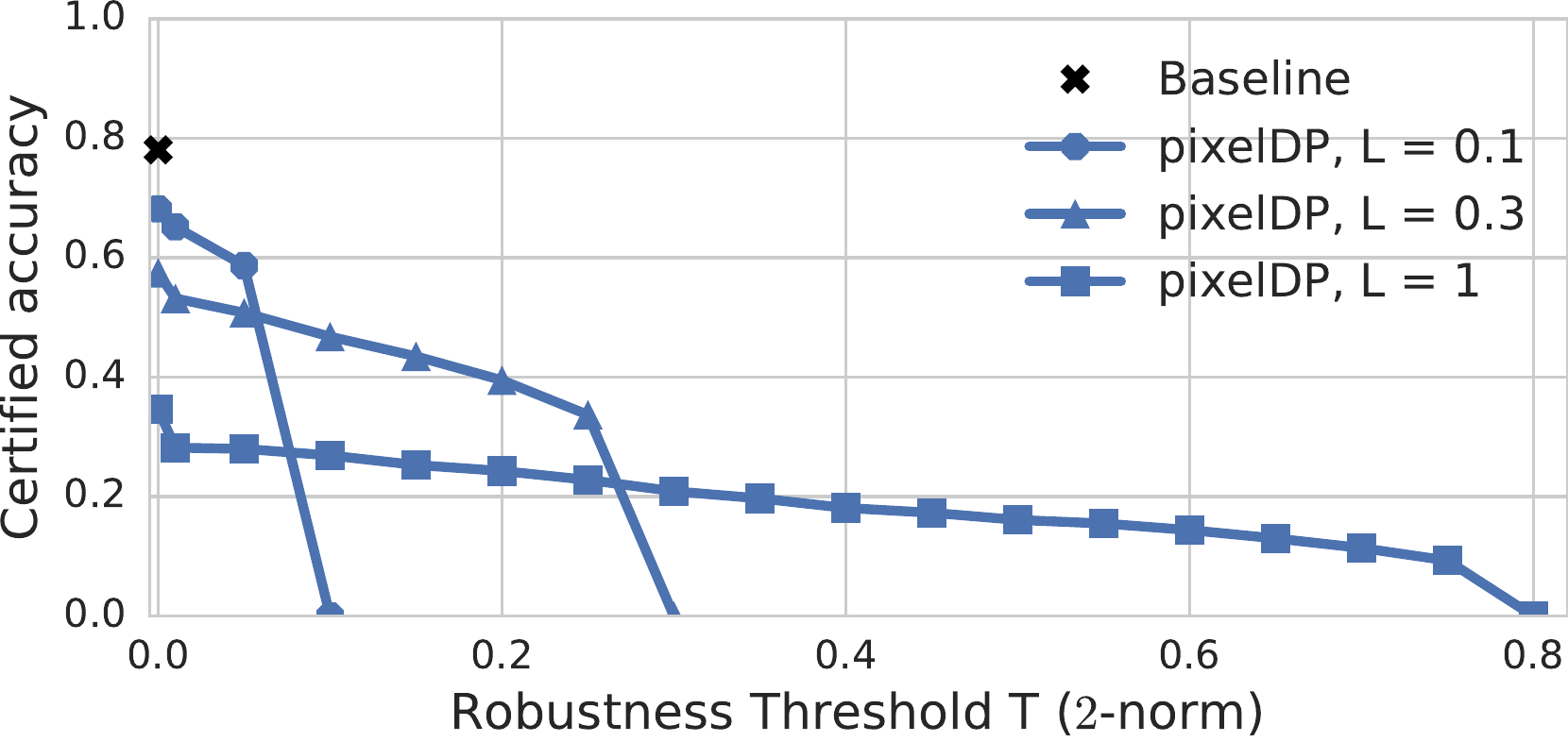}
		\label{sec:l2-robust-accuracy-imagenet}
	}
	\subfigure[CIFAR-10 Certified Accuracy]{
		\includegraphics[width=0.47\textwidth]{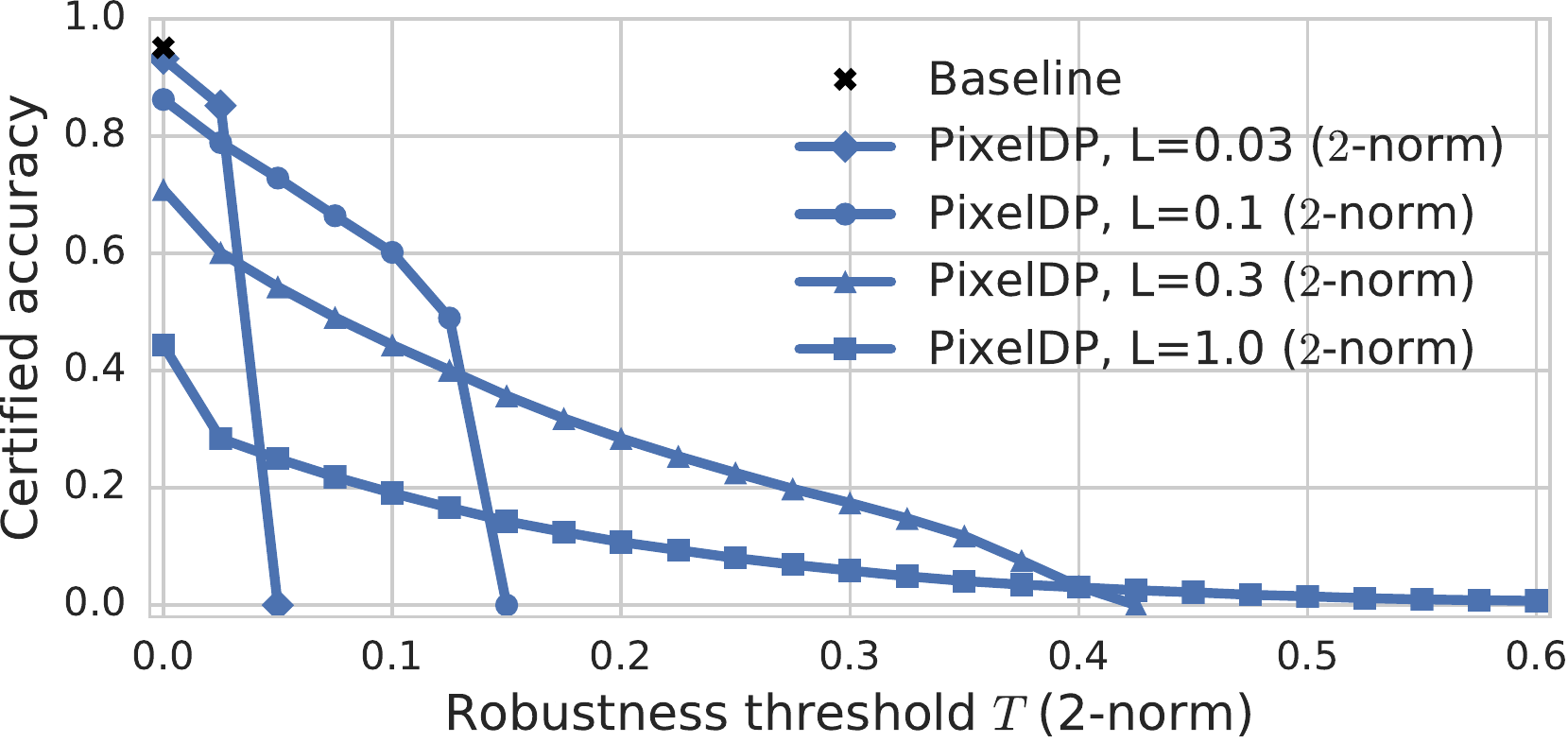}
		\label{sec:l2-robust-accuracy-cifar}
	}
	\vspace{-5pt}
	\caption{
		\textbf{Certified accuracy, varying the \Lname ($L$) and \Tname ($T$)}, on
		ImageNet auto-encoder/Inception and CIFAR-10 ResNet, 2-norm bounds.
		Robust accuracy at high Robustness thresholds (high $T$) increases with high-noise networks (high $L$).
		Low noise networks are both more accurate and more certifiably robust for low $T$.
	}
	\vspace{-7pt}
	\label{fig:robust_acc}
\end{figure*}

Table~\ref{table:pixeldp-acc} shows the conventional accuracy of these networks and highlights two parts of an answer to Q1.
First, at fairly low but meaningful \Lname (e.g., $L=0.1$), all of our DNNs exhibit reasonable accuracy loss -- {\em even on ImageNet}, a dataset on which no guarantees have been made to date!
ImageNet: The Inception-v3 model stacked on the \pixeldp auto-encoder has an accuracy of
68.3\% for $L=0.1$, which is reasonable degradation compared to the baseline of 77.5\% for the unprotected network.
CIFAR-10: Accuracy goes from 95.5\% without defense to 87\% with the $L=0.1$ defense.
For comparison, the Madry model has an accuracy of 87.3\% on CIFAR-10.
SVHN: our $L=0.1$ \pixeldp network achieves 93.1\% conventional accuracy, down from 96.3\% for the unprotected network.
For comparison, the $L=0.1$ RobustOpt network has an accuracy of 79.6\%, although they use a smaller DNN due to the computationally intensive method.

Second, as expected, constructing the network for larger attacks (higher $L$) progressively degrades accuracy.
ImageNet: Increasing $L$ to $0.3$ and then $1.0$ drops the accuracy to 57.7\%
and 37.7\%, respectively.
CIFAR-10: The ResNet with the least noise ($L=0.03$) reaches $93.3\%$ accuracy, close to
the baseline of $95.5\%$;
increasing noise levels ($L=(0.1, 0.3, 1.0)$) yields $87\%$,  $70.9\%$, and
$37.7\%$, respectively.
Yet, as shown in \S\ref{sec:evaluation-malicious-samples}, \pixeldp networks trained with fairly low $\L$ values (such as $\L=0.1$) already provide meaningful empirical protection against larger attacks.

\subsection{Certified Accuracy (Q2)}
\label{sec:evaluation-certification}

{\em Q2: What accuracy can \pixeldp certify on a test set?}
\F\ref{fig:robust_acc} shows the certified robust accuracy bounds for ImageNet and CIFAR-10 models, trained with various values of the \Lname $\L$.
The certified accuracy is shown as a function of the \Tname, $\T$.
We make two observations.
First, \pixeldp yields meaningful robust accuracy bounds even on large networks for ImageNet (see \F\ref{sec:l2-robust-accuracy-imagenet}), attesting the scalability of our approach.
The $\L=0.1$ network has a certified accuracy of 59\% for attacks smaller than $0.09$ in $2$-norm. The $\L=0.3$ network has a certified accuracy of 40\% to attacks up to size $0.2$.
To our knowledge, \pixeldp is the first defense to yield DNNs with certified bounds on accuracy under $2$-norm attacks on datasets of ImageNet's size and for large networks like Inception.

Second, \pixeldp networks constructed for larger attacks (higher $\L$, hence higher noise) tend to yield higher certified accuracy for high thresholds $T$.
For example, the ResNet on CIFAR-10 (see \F\ref{sec:l2-robust-accuracy-cifar}) constructed with $L=0.03$ has the highest robust accuracy
up to $T=0.03$, but the ResNet constructed with $L=0.1$ becomes better past that threshold.
Similarly, the $L=0.3$ ResNet has higher robust accuracy than the $L=0.1$ ResNet
above the $0.14$ $2$-norm \Tname.

We ran the same experiments on SVHN, CIFAR-100 and MNIST models but omit the
graphs for space reasons.
Our main conclusion -- that adding more noise (higher $L$)
hurts both conventional and low $T$ certified accuracy, but enhances the quality of
its high $T$ predictions -- holds in all cases.
Appendix \ref{appendix:design-choices} discusses the impact of some design choices on robust accuracy, and Appendix \ref{appendix:linf-attacks} discusses \pixeldp guarantees as compared with previous certified defenses for $\infty$-norm attacks. While \pixeldp does not yet yield strong $\infty$-norm bounds, it provides meaningful certified accuracy bounds for $2$-norm attacks, including on much larger and more complex datasets and networks than those supported by previous approaches.

\subsection{Accuracy Under Attack (Q3)}
\label{sec:evaluation-malicious-samples}

A standard method to evaluate the strength of a defense is to
measure the conventional accuracy of a defended model on malicious samples
obtained by running a state-of-the-art attack against samples in a held-out testing
set~\cite{madry}.
We apply this method to answer three aspects of question {\em Q3:
(1) Can \pixeldp help defend complex models on large datasets in practice?
(2) How does \pixeldp's accuracy under attack compare to state-of-the-art defenses?
(3) How does the accuracy under attack change for certified predictions?}

\begin{figure}[t]
	\centering
	\footnotesize
	\includegraphics[width=0.47\textwidth]{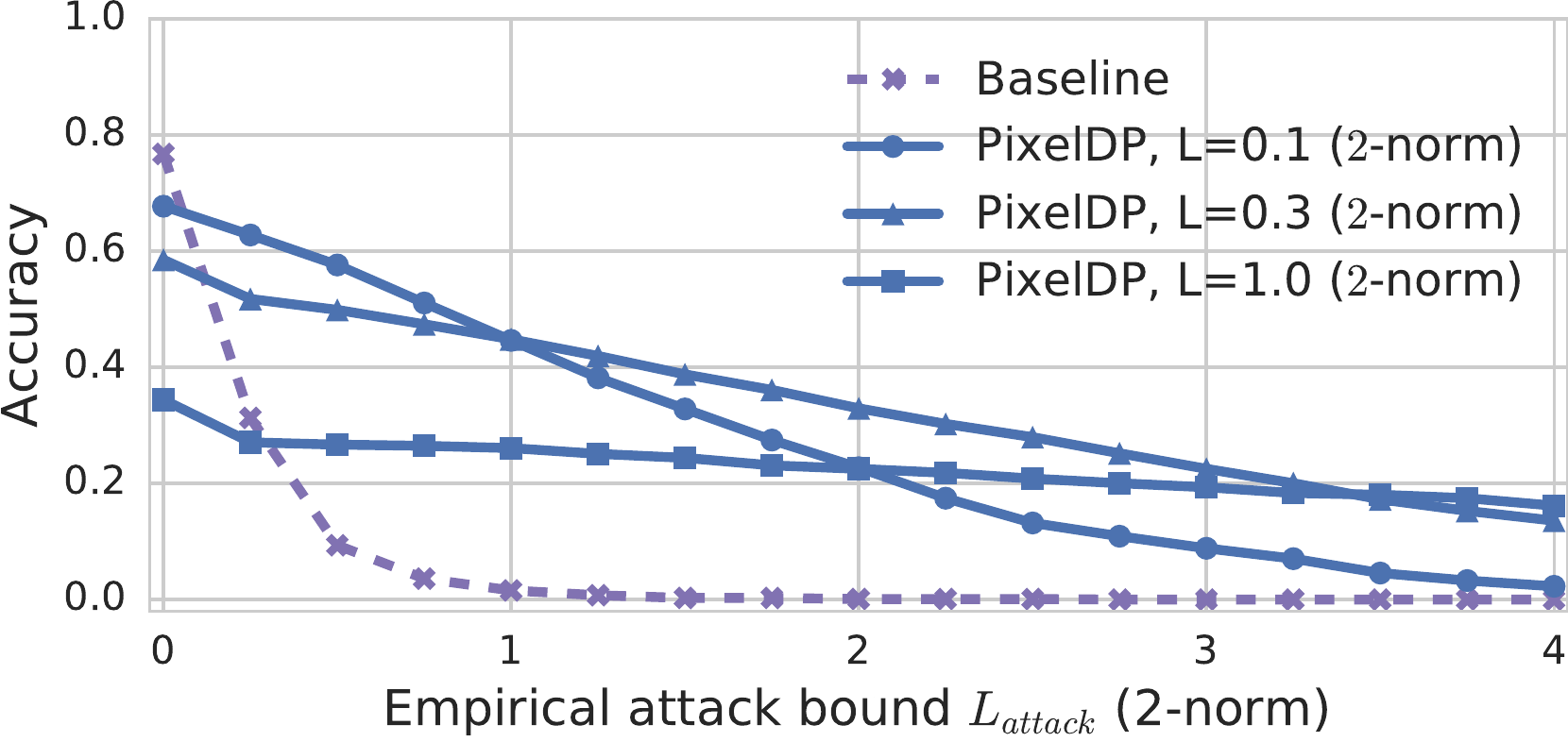}
	\vspace{-2pt}
	\caption{
		{\bf Accuracy under attack on ImageNet.} For the ImageNet auto-encoder plus Inception-v3, $L \in \{0.1,0.3,1.0\}$ $2$-norm attacks. The \pixeldp auto-encoder increases the robustness of Inception against $2$-norm attacks.
	}
	\vspace{-7pt}
	\label{fig:accuracy-under-attack-imagenet}
\end{figure}

\heading{Accuracy under Attack on ImageNet.}
We first study conventional accuracy under attack for \pixeldp models on ImageNet.
\F\ref{fig:accuracy-under-attack-imagenet} shows this metric for $2$-norm
attacks on the baseline Inception-v3 model, as well as three defended versions,
with a stacked \pixeldp auto-encoder trained with \Lname $L \in \{0.1, 0.3, 1.0\}$.
\pixeldp makes the model significantly more robust to attacks.
For attacks of size $L_{attack} = 0.5$, the baseline model's accuracy drops to
11\%, whereas the $L=0.1$ \pixeldp model's accuracy remains above 60\%. At $L_{attack} = 1.5$,
the baseline model has an accuracy of $0$, but the $L=0.1$ \pixeldp is still
at 30\%, while the $L=0.3$ \pixeldp model have more that 39\% accuracy.

\begin{figure*}[t]
  \centering
  \footnotesize
  \subfigure[CIFAR-10\vspace{-8pt}]{
    \includegraphics[width=0.47\textwidth]{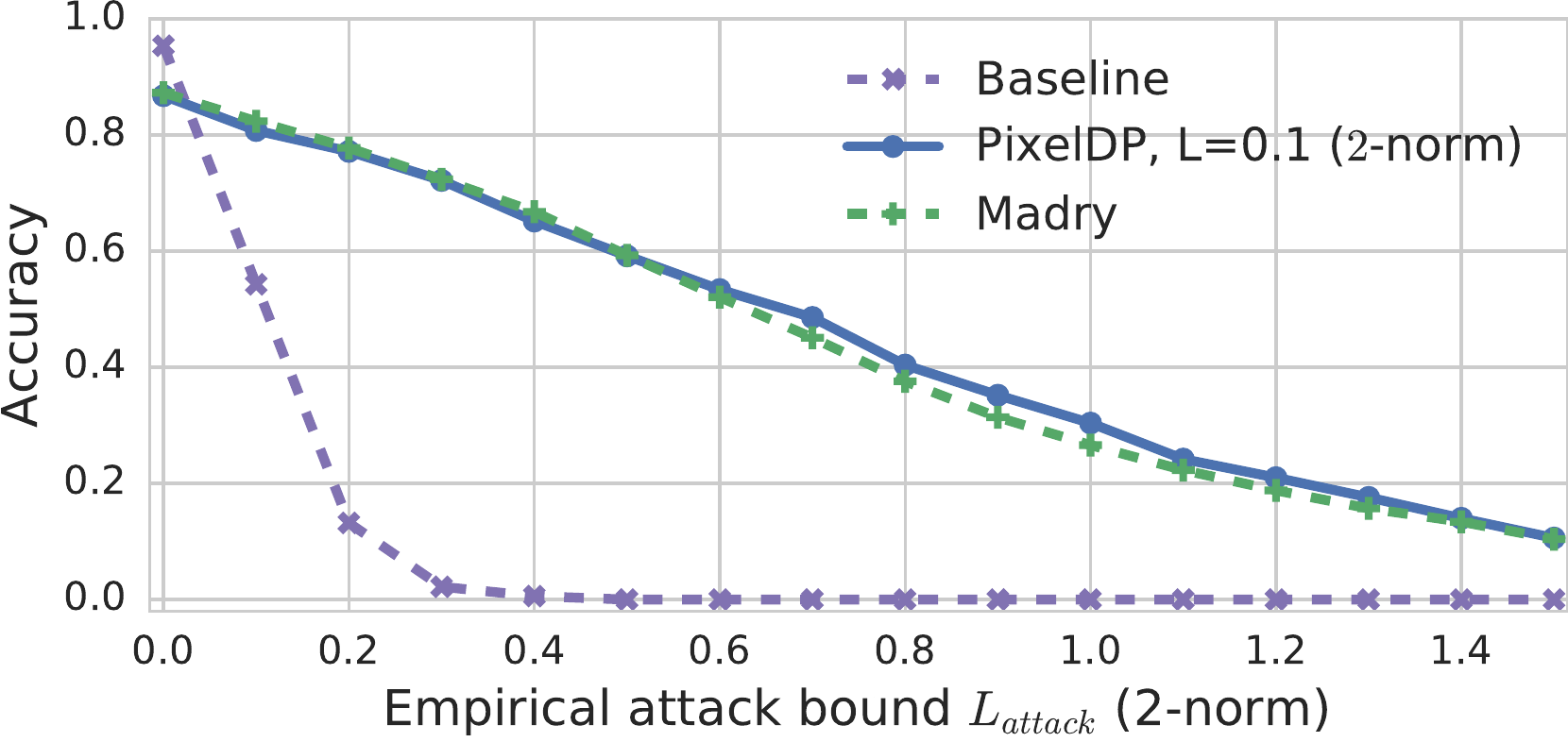}
    \label{fig:empirical_robustness_madry}
  }
  \subfigure[SVHN\vspace{-8pt}]{
    \includegraphics[width=0.47\textwidth]{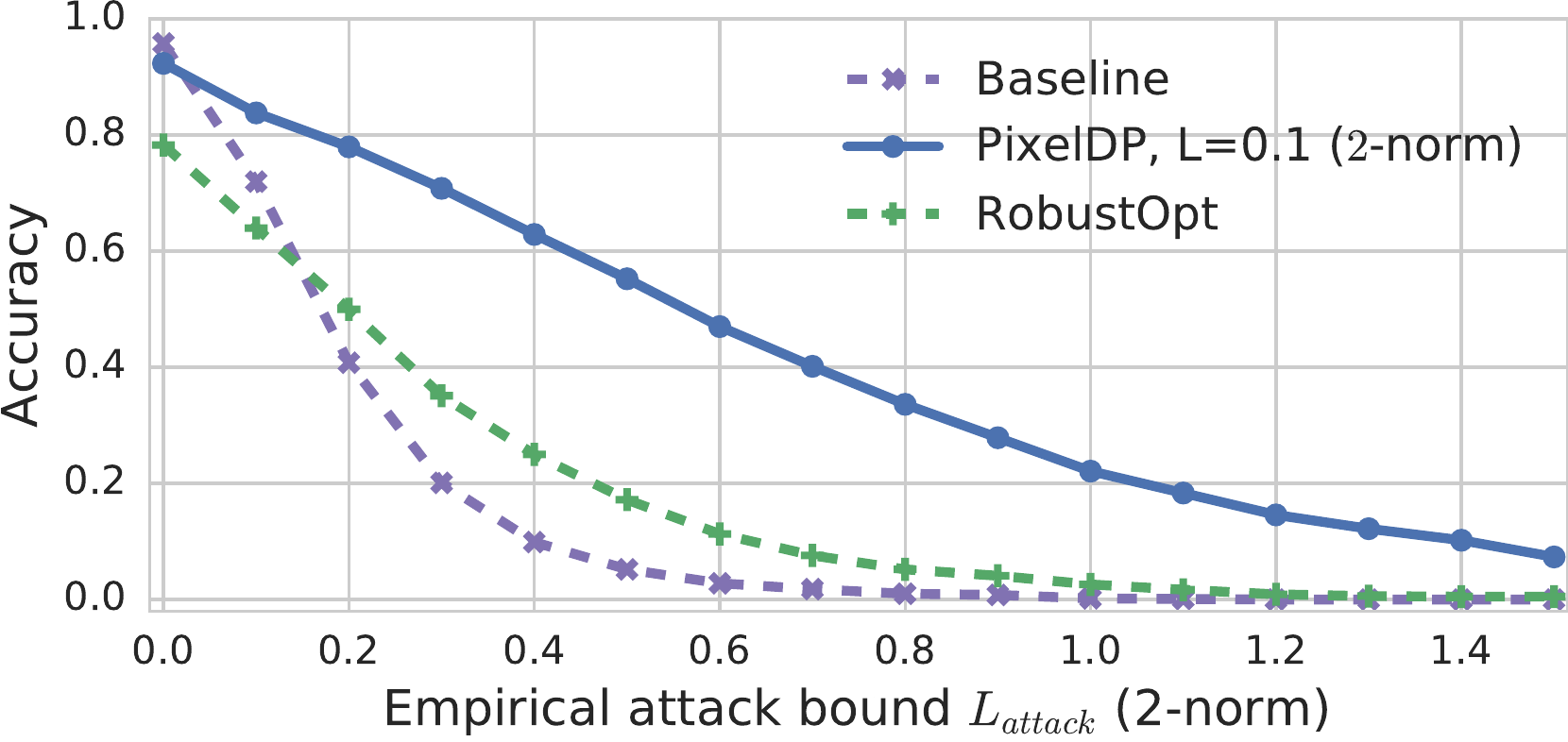}
    \label{fig:empirical_robustness_cmu}
  }
  \vspace{-5pt}
    \caption{
      {\bf Accuracy under $2$-norm attack for \pixeldp vs. Madry and RobustOpt}, CIFAR-10 and SVHN.
        For $2$-norm attacks, \pixeldp is on par with Madry until $L_{attack} \geq 1.2$; RobustOpt support only small models, and has lower accuracy.
    }
	\vspace{-7pt}
    \label{fig:empirical_robustness_madry_comparison}
\end{figure*}

\heading{Accuracy under Attack Compared to Madry.}
\F\ref{fig:empirical_robustness_madry} compares conventional accuracy of a
\pixeldp model to that of a Madry model on CIFAR-10, as the \Lattackname
increases for $2$-norm attacks.  For $2$-norm attacks, our model achieves
conventional accuracy on par with, or slightly higher than, that of the Madry
model.  Both models are dramatically more robust under this attack compared to
the baseline (undefended) model.
For $\infty$-norm attacks our model does not fare as well, which is expected as
the \pixeldp model is trained to defend against $2$-norm attacks, while the Madry model
is optimized for $\infty$-norm attacks. For $L_{attack}=0.01$,
\pixeldp's accuracy is 69\%, 8 percentage points lower than Madry's. The gap increases
until \pixeldp arrives at $0$ accuracy for $L_{attack}=0.06$, with Madry still having
22\%.
Appendix~\S\ref{appendix:linf-attacks} details this evaluation.

\heading{Accuracy under Attack Compared to RobustOpt.}
\F\ref{fig:empirical_robustness_cmu} shows a similar comparison with the RobustOpt defense \cite{wong2018provable},
which provides certified accuracy bounds for $\infty$-norm attacks. We use the SVHN dataset for the comparison as the RobustOpt defense has not yet been applied to larger datasets.
Due to our support of larger DNN (ResNet), \pixeldp starts with higher accuracy, which it maintains
under $2$-norm attacks. For attacks of $L_{attack} = 0.5$, RobustOpt is bellow $20$\% accuracy, and \pixeldp above $55$\%.
Under $\infty$-norm attacks, the behavior is different: \pixeldp has the advantage up to $L_{attack} = 0.015$ (58.8\% to 57.1\%), and RobustOpt is better thereafter. For instance, at $L_{attack} = 0.03$, \pixeldp has 22.8\% accuracy, to RobustOpt's 32.7\%.
Appendix \S\ref{appendix:linf-attacks} details the $\infty$-norm attack evaluation.

\begin{figure}[t]
    \centering
    \footnotesize
    \includegraphics[width=0.48\textwidth]{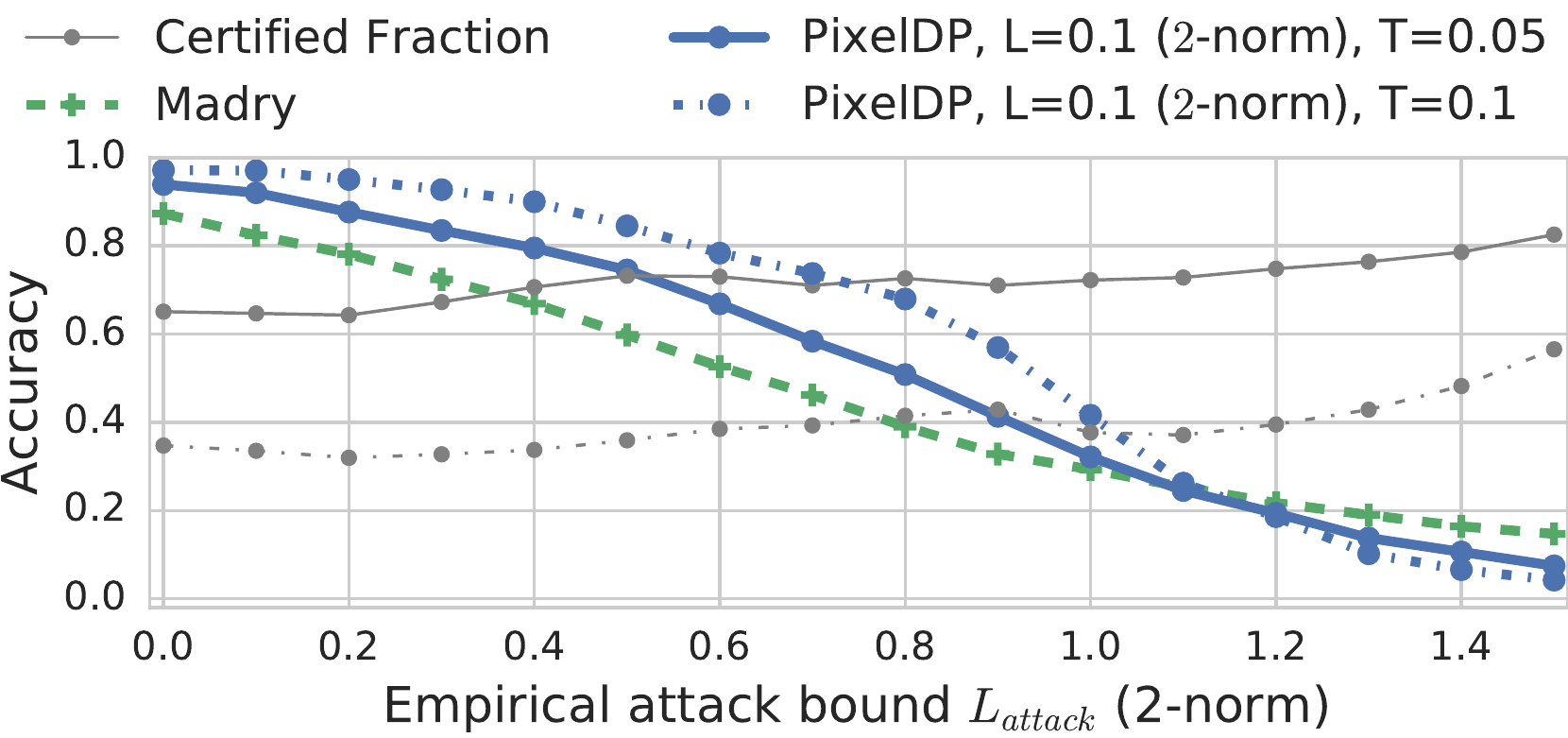}
    \vspace{-8pt}
    \caption{
      {\bf \pixeldp certified predictions vs. Madry accuracy, under attack},
      CIFAR-10 ResNets, $2$-norm attack.
        \pixeldp makes fewer but more correct predictions up to $L_{attack} = 1.0$.
    }
	\vspace{-7pt}
    \label{fig:robust_precision-under-attack-l2-attack}
\end{figure}

\heading{Precision on Certified Predictions Under Attack.}
Another interesting feature of \pixeldp is its ability to make certifiably robust
predictions. We compute the accuracy of these certified predictions under attack -- which we term {\em robust precision} --
and compare them to predictions of the Madry network that do not provide such a
certification.
\F\ref{fig:robust_precision-under-attack-l2-attack} shows the results of considering
only predictions with a certified robustness above $0.05$ and $0.1$.
It reflects the benefit to be gained by applications that can leverage our
theoretical guarantees to filter out non-robust predictions.
We observe that \pixeldp's robust predictions are {\em substantially more correct} than
Madry's predictions up to an \Lattackname of $1.1$.
For $T=0.05$ \pixeldp's robust predictions are $93.9$\% accurate, and up to 10
percentage points more correct under attack for $L_{attack} \leq 1.1$.
A robust prediction is given for above $60$\% of the data points.
The more conservative the robustness test is (higher $T$), the more correct
\pixeldp's predictions are, although it makes fewer of them (Certified Fraction lines).

Thus, for applications that can afford to not act on a minority of
the predictions, \pixeldp's robust predictions under 2-norm attack are substantially
more precise than Madry's.
For applications that need to act on every prediction, \pixeldp offers on-par
accuracy under 2-norm attack to Madry's.
Interestingly, although our defense is trained for $2$-norm attacks, the first conclusion still holds for $\infty$-norm attacks;
the second (as we saw) does not.

\subsection{Computational Overhead (Q4)}
\label{sec:evaluation-performance}

{\em Q4: What is \pixeldp's computational overhead?}
We evaluate overheads for training and prediction.  \pixeldp adds little overhead for {\em training}, 
as the only additions are a random noise tensor and sensitivity computations.
On our GPU, the CIFAR-10 ResNet baseline takes on average $0.65s$ per training step.
\pixeldp versions take at most $0.66s$ per training step (1.5\% overhead). This represents
a significant benefit over adversarial training (e.g. Madry) that requires finding
good adversarial attacks for each image in the mini-batch at each gradient step, and over
robust optimization (e.g. RobustOpt) that requires solving a constrained optimization problem
at each gradient step.
The low training overhead is instrumental to our support of large models and datasets.

\pixeldp impacts {\em prediction} more substantially, since it uses multiple
noise draws to estimate the label scores.
Making a prediction for a single image with $1$ noise draw takes $0.01s$ on
average.  Making $10$ draws brings it only to $0.02s$, but $100$ requires
$0.13s$, and $1000$, $1.23s$.
It is possible to use Hoeffding's inequality~\cite{hoeffding1963probability} to
bound the number of draws necessary to distinguish the highest score with
probability at least $\eta$, given the difference between the top two
scores $y_{max} - y_{second-max}$. %These bounds can be loose, and we do not know the
Empirically, we found that $300$ draws were typically necessary
to properly certify a prediction, implying a prediction time of $0.42s$ seconds, a
$42\times$ overhead.  This is parallelizable, but resource consumption is still
substantial.
To make simple predictions -- distinguish the top label when we must make a
prediction on all inputs -- 25 draws are enough in practice, reducing the overhead
to $3\times$.

\section{Analysis}
\label{sec:analysis}

We make three points about PixelDP's guarantees and applicability.
First, we emphasize that our Monte Carlo approximation of the function $x \mapsto \E(A(x))$ is \emph{not} intended to be a DP procedure. Hence, there is no need to apply composition rules from DP, because we do not need this randomized procedure to be DP. Rather, the Monte Carlo approximation $x \mapsto \hat\E(A(x))$ is just that: an approximation to a function $x \mapsto \E(A(x))$ whose robustness guarantees come from Lemma~\ref{lemma:expectation-bound}.
The function $x \mapsto \hat\E(A(x))$ does not satisfy DP, but because we can control the Monte Carlo estimation error using standard tools from probability theory, it is also robust to small changes in the input, just like $x \mapsto \E(A(x))$.

Second, Proposition~\ref{prop:robustness-condition} is not a high probability result; it is valid with probability $1$ even when $A$ is $(\epsilon, \delta>0)$-DP.
The $\delta$ parameter can be thought of as a ``failure probability'' of an $(\epsilon, \delta)$-DP mechanism: a chance that a small change in input will cause a big change in the probability of some of its outputs.
However, since we know that $A_k(x) \in [0,1]$, the worst-case impact of such failures on the expectation of the output of the $(\epsilon, \delta)$-DP mechanism is {\em at most $\delta$}, as proven in Lemma~\ref{lemma:expectation-bound}.
Proposition~\ref{prop:robustness-condition} explicitly accounts for this worst-case impact (term $(1+e^{\epsilon}) \delta$ in Equation~\eqref{eq:robustness-condition}).

Were we able to compute $\E(A(x))$ analytically, \pixeldp would output deterministic robustness certificates. In practice however, the exact value is too complex to compute, and hence we approximate it using a Monte Carlo method. This adds probabilistic measurement error bounds, making the final certification (Proposition~\ref{prop:general-robustness-condition}) a high probability result. However, the uncertainty comes exclusively from the Monte Carlo integration -- and can be made arbitrarily small with more runs of the \pixeldp DNN -- and not from the underlying $(\epsilon, \delta)$-DP mechanism $A$.
Making the uncertainty small gives an adversary a small chance to fool a PixelDP network into thinking that its prediction is robust when it is not.
The only ways an attacker can increase that chance is by either submitting the same attack payload many times or gaining control over PixelDP's source of randomness.

Third,
\pixeldp applies to any task for which we can measure changes to input in a meaningful $p$-norm, and bound the sensitivity to such changes at a given layer in the DNN (e.g. sensitivity to a bounded change in a word frequency vector, or a change of class for categorical attributes).
\pixeldp also applies to multiclass classification where the prediction procedure returns several top-scoring labels.
Finally, Lemma~\ref{lemma:expectation-bound} can be extended to apply to DP mechanism with (bounded) output that can also be negative, as shown in Appendix~\ref{appendix:regression-extension}. \pixeldp thus directly applies to DNNs for regression tasks (i.e. predicting a real value instead of a category) as long as the output is bounded (or unbounded if $\delta=0)$.
The output can be bounded due to the specific task, or by truncating the results to a large range of values and using a comparatively small $\delta$. 

\section{Related Work}
\label{sec:related-work}

Our work relates to a significant body of work in adversarial examples and beyond.
Our main contribution to this space is to introduce a new and very different direction for building {\em certified defenses}. Previous attempts have built on robust optimization theory. In \pixeldp we propose a new approach built on differential privacy theory which exhibits a level of flexibility, broad applicability, and scalability that exceeds what robust optimization-based certified defenses have demonstrated.
While the most promising way to defend against adversarial examples is still an open question, we observe undebatable benefits unique to our DP based approach,
such as the post-processing guarantee of our defense.
In particular, the ability to prepend a defense to unmodified networks via a \pixeldp auto-encoder, as we did to defend Inception with {\em no structural changes}, is unique among certified (and best-effort) defenses.

\heading{Best-effort Defenses.}
Defenders have used multiple heuristics to empirically increase DNNs' robustness.
These defenses include model distillation~\cite{papernot2016distillation}, automated detection of adversarial examples~\cite{hendrycks2017detecting,metzen2017detecting,manganet-defense}, application of various input
transformations~\cite{thermometer-defense,input-transformation-defense}, randomization~\cite{random-activation-pruning-defense,randomization-defense}, and  generative models~\cite{gan-defense,robust-manifold-defense,generative-models-defense}. Most of these defenses have been broken, sometimes months after their publication~\cite{carlini-attacks,carlini2017adversarial,obfuscated-gradients}.

The main empirical defense that still holds is Madry et al.~\cite{madry}, based
on adversarial training~\cite{goodfellow2014explaining}.  Madry et al. motivate
their approach with robust optimization, a rigorous theory. However not all the
assumptions are met, as this approach runs
a best-effort attack on each image in the minibatch at each gradient step, when
the theory requires finding the best possible adversarial attack.
And indeed, finding this worst case adversarial example for ReLU DNNs, used
in~\cite{madry}, was proven to be NP-hard in~\cite{2017arXiv171010571S}.
Therefore, while this defense works well in practice, it gives no theoretical guarantees for individual predictions or for the model’s accuracy under attack.
\pixeldp leverages DP theory to provide guarantees of robustness to arbitrary, norm-based attacks for individual predictions.

Randomization-based defenses are closest in method to our work~\cite{random-activation-pruning-defense,randomization-defense,random-self-ensemble}.
For example, Liu et al.~\cite{random-self-ensemble} randomizes the entire DNN and predicts using an ensemble of multiple copies of the DNN, essentially using draws to roughly estimate the expected $\arg\max$ prediction.
They observe empirically that randomization smoothens the prediction function, improving robustness to adversarial examples.
However, randomization-based prior work provides limited formalism that is insufficient to answer important defense design questions: where to add noise, in what quantities, and what formal guarantees can be obtained from randomization?
The lack of formalism has caused some works~\cite{random-activation-pruning-defense,randomization-defense} to add insufficient amounts of noise (e.g., noise not calibrated to pre-noise sensitivity), which makes them vulnerable to attack~\cite{carlini2017adversarial}.
On the contrary,~\cite{random-self-ensemble} inserts randomness into every layer of the DNN: our work shows that adding the right amount of calibrated noise at a single layer is sufficient to leverage DP's post-processing guarantee and carry the bounds through the end of the network.
Our paper formalizes randomization-based defenses using DP theory, and in doing so helps answer many of these design questions.
Our formalism also lets us reason about the guarantees obtained through randomization and enables us to elevate randomization-based approaches from the class of best-effort defenses to that of {\em certified defenses}.

\heading{Certified Defenses and Robustness Evaluations.}
\pixeldp offers two functions: (1) a strategy for learning robust models and (2) a method for evaluating the robustness of these models against adversarial examples.
Both of these approaches have been explored in the literature.
First, several certified defenses modify the neural network training process to minimize the number of robustness violations~\cite{wong2018provable, raghunathan2018certified, pmlr-v70-cisse17a}. These approaches, though promising, do not yet scale to larger networks like Google Inception~\cite{wong2018provable, raghunathan2018certified}.
In fact, all published certified defenses have been evaluated on small models and datasets~\cite{wong2018provable, raghunathan2018certified, pmlr-v70-cisse17a, mirman2018differentiable}, and at least in one case, the authors directly acknowledge that some components of their defense would be ``completely infeasible'' on ImageNet~\cite{wong2018provable}.
A recent paper \cite{2018arXiv180510265D} presents a certified defense evaluated on the CIFAR-10 dataset~\cite{cifar-dataset} for multi-layer DNNs (but smaller than ResNets). Their approach is completely different from ours and, based on the current results we see no evidence that it can readily scale to large datasets like ImageNet.

Another approach~\cite{2017arXiv171010571S} combines robust optimization and
adversarial training in a way that gives formal guarantees and has lower computational complexity than previous robust optimization work, hence it has the potential to scale better. This approach requires smooth DNNs (e.g., no ReLU or max pooling) and robustness guarantees are over the expected loss (e.g., log loss), whereas \pixeldp can certify each specific prediction, and also provides intuitive metrics like robust accuracy, which is not supported by \cite{2017arXiv171010571S}.
Finally, unlike \pixeldp, which we evaluated on five datasets of increasing size and complexity, this technique was evaluated only on MNIST, a small dataset that is notoriously amenable to robust optimization (due to being almost black and white).  Since the effectiveness of all defenses depends on the model and dataset, it is hard to conclude anything about how well it will work on more complex datasets.

Second, several works seek to formally verify~\cite{huang2017safety, DBLP:journals/corr/KatzBDJK17, wang2018efficient, reluval2018, dutta2018output, gehrai, tjeng2017evaluating} or lower bound~\cite{peck2017lower, weng2018evaluating} the robustness of pre-trained ML models against adversarial attacks.
Some of these works scale to large networks~\cite{peck2017lower, weng2018evaluating}, but they are insufficient from a defense perspective as they provide no scalable way to train robust models.

\heading{Differentially Private ML.}
Significant work focuses on making ML algorithms DP to preserve the privacy of
training sets~\cite{DBLP:conf/kdd/McSherryM09,2016arXiv160700133A,Chaudhuri:2011:DPE:1953048.2021036}.
PixelDP is orthogonal to these works, differing in goals, semantic, and algorithms.
The only thing we share with DP ML (and most other applied DP literature) are
DP theory and mechanisms.
The goal of DP ML is to learn the parameters of a model while ensuring DP with
respect to the training data. Public release of model parameters trained
using a DP learning algorithm (such as DP empirical risk minimization or ERM) is
guaranteed to not reveal much information about individual training examples.
PixelDP's goal is to create a robust predictive model where a small change to any
input example does not drastically change the model's prediction on that example.
We achieve this by ensuring that the model's scoring function is a DP function with respect
to the features of an input example (eg, pixels).
DP ML algorithms (e.g., DP ERM) do not necessarily produce models that satisfy
PixelDP's semantic, and our training algorithm for producing PixelDP models does
not ensure DP of training data.

\heading{Previous DP-Robustness Connections.}
Previous work studies generalization properties of
DP~\cite{bassily2016algorithmic}. It is shown that {\em learning algorithms}
that satisfy DP with respect to the training data have statistical benefits in
terms of out-of-sample performance; or that DP has a deep connection to robustness
at the dataset level~\cite{2013arXiv1306.1066D,dwork2009differential}.
Our work is rather different. Our learning algorithm is not DP; rather, the
predictor we learn satisfies DP with respect to the atomic units (e.g., pixels)
of a given test point.

\section{Conclusion}
\label{sec:conclusion}

We demonstrated a connection between robustness against adversarial
examples and differential privacy theory.
We showed how the connection can be leveraged to develop a certified
defense against such attacks that is (1) as effective at defending against $2$-norm attacks
as today's state-of-the-art best-effort defense and (2) more scalable and broadly applicable to large networks compared to any prior certified defense.
Finally, we presented the first evaluation of a certified $2$-norm defense on the large-scale ImageNet dataset.
In addition to offering encouraging results, the evaluation highlighted the substantial flexibility of our approach by leveraging a convenient autoencoder-based architecture to make the experiments possible with limited resources.

\section{Acknowledgments}

We thank our shepherd, Abhi Shelat, and the anonymous reviewers, whose comments helped us improve the paper significantly.
This work was funded through NSF CNS-1351089, CNS-1514437, and CCF-1740833, ONR N00014-17-1-2010, two Sloan Fellowships, a Google Faculty Fellowship, and a Microsoft Faculty Fellowship.

% -------------------- %

\begin{spacing}{0.92}
{
  \bibliographystyle{abbrv}
  % \bibliography{bib/abbrev,bib/conferences,bib/refs,bib/news}

}
\end{spacing}

\appendix
\section{Appendix}

\subsection{Proof of Proposition~\ref{prop:general-robustness-condition}}
\label{appendix:proposition-2-proof}

We briefly re-state the Proposition and detail the proof.
\begin{proposition*}
  Suppose $A$ is $(\epsilon,\delta)$-PixelDP for
  size $L$ in $p$-norm metric.
  For any input $x$, if for some $k \in \mathcal{K}$,
  \[
  \hat\E^{lb}(A_k(x))
  > e^{2\epsilon} \max_{i : i \neq k} \hat\E^{ub}(A_i(x)) + (1+e^{\epsilon}) \delta ,
  \]
  then the multiclass classification model based on label probabilities $\hat\E(A_k(x))$ is robust to attacks of $p$-norm $L$ on input $x$ with probability higher than $\eta$.
\end{proposition*}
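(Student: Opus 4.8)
The plan is to reduce this statement to the already-established, \emph{exact} Proposition~\ref{prop:robustness-condition} by cleanly separating its two ingredients: the differential-privacy stability bound, which is deterministic and carries no failure probability even when $\delta>0$, and the Monte Carlo estimation, which is the \emph{only} source of the confidence level $\eta$. Since Proposition~\ref{prop:robustness-condition} is a statement about the \emph{true} expectations $\E(A_i(x))$, it suffices to show that, on a high-probability event, the hypothesis of Proposition~\ref{prop:robustness-condition} holds for the true expectations whenever the empirical hypothesis of this proposition holds; robustness then follows deterministically.

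First I would fix the ``good event'' on which every confidence bound is simultaneously valid,
\[
G := \Big\{\, \hat\E^{lb}(A_i(x)) \le \E(A_i(x)) \le \hat\E^{ub}(A_i(x)) \ \text{for all } i \in \mathcal{K} \,\Big\}.
\]
By the construction of the $\eta$-confidence intervals (Hoeffding's inequality or the Empirical Bernstein bound) together with a union bound over the $K$ labels, $P(G)\ge\eta$. The randomness here is taken only over the finitely many noise draws used at prediction time; the $(\epsilon,\delta)$-PixelDP property of $A$ plays no role in defining $G$. Working on $G$, I would chain the sandwiching inequalities with the proposition's hypothesis:
\[
\E(A_k(x)) \ \ge\ \hat\E^{lb}(A_k(x)) \ >\ e^{2\epsilon}\max_{i\neq k}\hat\E^{ub}(A_i(x)) + (1+e^{\epsilon})\delta \ \ge\ e^{2\epsilon}\max_{i\neq k}\E(A_i(x)) + (1+e^{\epsilon})\delta ,
\]
where the outer inequalities use membership in $G$ (the upper bounds on the competitors dominate $\max_{i\neq k}\E(A_i(x))$) and the middle inequality is the stated hypothesis. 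This is exactly Equation~\eqref{eq:robustness-condition} written for the true expectations.

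Next I would invoke the size-$L$ version of Proposition~\ref{prop:robustness-condition}. Because $A$ is $(\epsilon,\delta)$-PixelDP for changes of $p$-norm size $L$, the expected-output stability bound of Lemma~\ref{lemma:expectation-bound}, and hence Equation~\eqref{eq:expectation-bound}, holds simultaneously for every $\alpha$ with $\|\alpha\|_p\le L$ (the group-privacy generalization already noted after Proposition~\ref{prop:robustness-condition}). The argument of Proposition~\ref{prop:robustness-condition} is deterministic, so on $G$ the recovered hypothesis yields $\E(A_k(x+\alpha)) > \max_{i\neq k}\E(A_i(x+\alpha))$ for all such $\alpha$, i.e.\ robustness at $x$ in the sense of Equation~\eqref{eq:robustness}. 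Consequently, on $G$ the empirical test passing implies the true-expectation model is robust; since $P(G)\ge\eta$, the certificate is correct with probability at least $\eta$. Equivalently, the probability of certifying a prediction that is \emph{not} in fact robust is at most $1-\eta$, since that event is contained in $G^c$.

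The main obstacle is conceptual rather than computational: one must be careful to attribute the failure probability $1-\eta$ \emph{exclusively} to the confidence intervals and to resist treating the DP parameter $\delta$ as a second, independent failure mode. The delicate point is making the decoupling precise — namely that conditioning on $G$ eliminates all randomness and collapses the claim onto the exact Proposition~\ref{prop:robustness-condition}, whose $(1+e^{\epsilon})\delta$ term already absorbs the worst-case effect of $\delta$ per Lemma~\ref{lemma:expectation-bound}. Once this is set up, the inequality chain and the union bound are entirely routine.
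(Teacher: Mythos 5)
Your proposal is correct and takes essentially the same route as the paper's own proof in Appendix~\ref{appendix:proposition-2-proof}: both combine the union-bounded $\eta$-confidence intervals at $x$ with the size-$L$ DP stability bound to recover the separation hypothesis of Proposition~\ref{prop:robustness-condition}, whose deterministic argument then yields robustness for all $\alpha \in B_p(L)$. Your explicit good event $G$ is just cleaner bookkeeping of the same argument --- it also clarifies that the chained inequalities concern the true expectations $\E(A(x'))$, which the paper's proof writes somewhat loosely as $\hat\E(A(x'))$.
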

\begin{proof}
  Consider any $\alpha \in B_p(L)$, and let $x' := x + \alpha$.
  From Equation~\eqref{eq:dp}, we have with $p > \eta$ that
  \begin{align*}
    \hat\E(A_{k}(x')) &  \ge (\hat\E(A_{k}(x)) - \delta) / e^\epsilon \\
                      &  \ge (\hat\E^{lb}(A_{k}(x)) - \delta) / e^\epsilon , \\
    \hat\E(A_{i : i \neq k}(x')) & \leq e^\epsilon \max_{i : i \neq k} \hat\E^{ub}(A_{i}(x)) + \delta, \quad i \neq k .
  \end{align*}
  Starting from the first inequality, and using the hypothesis, followed by the second
  inequality, we get
  \begin{align*}
   \hat\E^{lb}(A_{k}(x)) > \ & \ e^{2\epsilon} \max_{i : i \neq k} \hat\E^{ub}(A_i(x)) + (1 + e^\epsilon) \delta \Rightarrow \\
    \hat\E(A_{k}(x')) \geq \ & (\hat\E^{lb}(A_{k}(x)) - \delta) / e^\epsilon \\
                                  > \ & e^{\epsilon} \max_{i : i \neq k} \hat\E^{ub}(A_i(x)) + \delta \\
               > \ & \hat\E(A_{i : i \neq k}(x'))
  \end{align*}
  which is the robustness condition from Equation~\eqref{eq:robustness}.
\end{proof}

\subsection{Design Choice}
\label{appendix:design-choices}

Our theoretical results allow the DP DNN to output any bounded score over labels $A_{k}(x)$. In the evaluation we used the softmax output of the DNN, the typical ``probabilities'' that DNNs traditionally output. We also experimented with using $\arg\max$ scores, transforming the probabilities in a zero vector with a single $1$ for the highest score label. As each dimension of this vector is in $[0, 1]$, our theory applies as is. We observed that $\arg\max$ scores were a bit less robust empirically (lower accuracy under attack). However, as shown on \F\ref{fig:argmax-robust-acc} $\arg\max$ scores yield a higher certified accuracy. This is both because we can use tighter bounds for measurement error using a Clopper-Pearson interval, and because the $\arg\max$ pushes the expected scores further apart, thus satisfying Proposition~\ref{prop:robustness-condition} more often.

\begin{figure}[t]
\centering
  \includegraphics[width=0.40\textwidth]{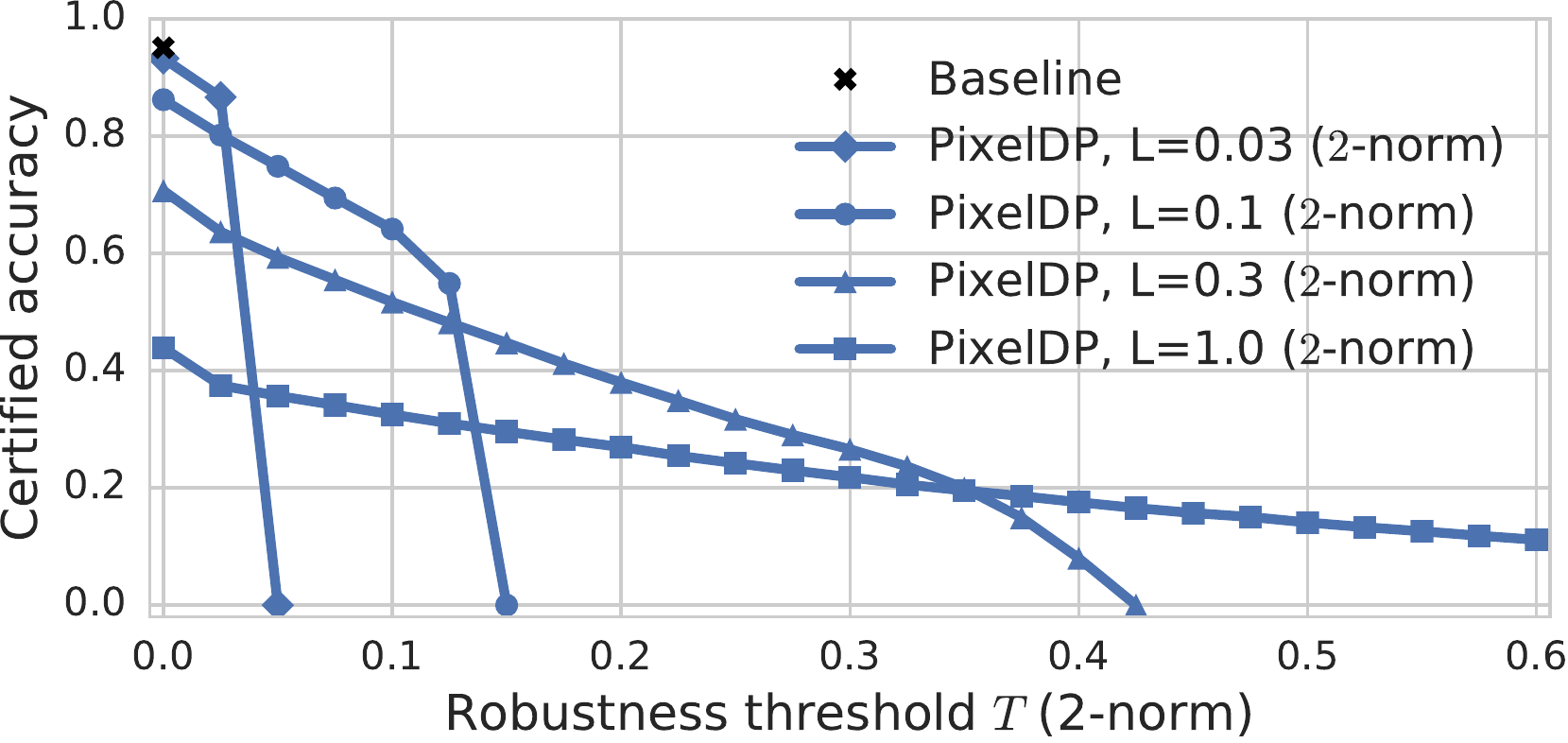}
  \vspace{-2pt}
\caption{{\bf Robust Accuracy, $\arg\max$ Scores.}
  Using $\arg\max$ scores for certification yields better accuracy bounds (see \F\ref{sec:l2-robust-accuracy-cifar}), both because the scores are further appart and because the measurement error bounds are tighter.}
\label{fig:argmax-robust-acc}
\vspace{-10pt}
\end{figure}

We also study the impact of the DP mechanism used on certified accuracy for $1$-norm attacks.
Both the Laplace and Gaussian mechanisms can be used after
the first convolution, by respectively controlling the $\Delta_{1,1}$ or $\Delta_{1,2}$
sensitivity.
\F\ref{fig:laplace-vs-gaussian} shows that for our ResNet, the Laplace mechanism
is better suited to low levels of noise: for $L=0.1$, it yields a slightly higher accuracy
($90.5\%$ against $88.9\%$), as well as better certified accuracy with a maximum
robustness size of $1$-norm $0.22$ instead of $0.19$, and a robust accuracy of
$73\%$ against $65.4\%$ at the $0.19$ threshold.
On the other hand, when adding more noise (e.g. $L=0.3$), the Gaussian mechanism
performs better, consistently yielding a robust
accuracy $1.5$ percentage point higher.

\begin{figure}[t]
\centering
  \includegraphics[width=0.40\textwidth]{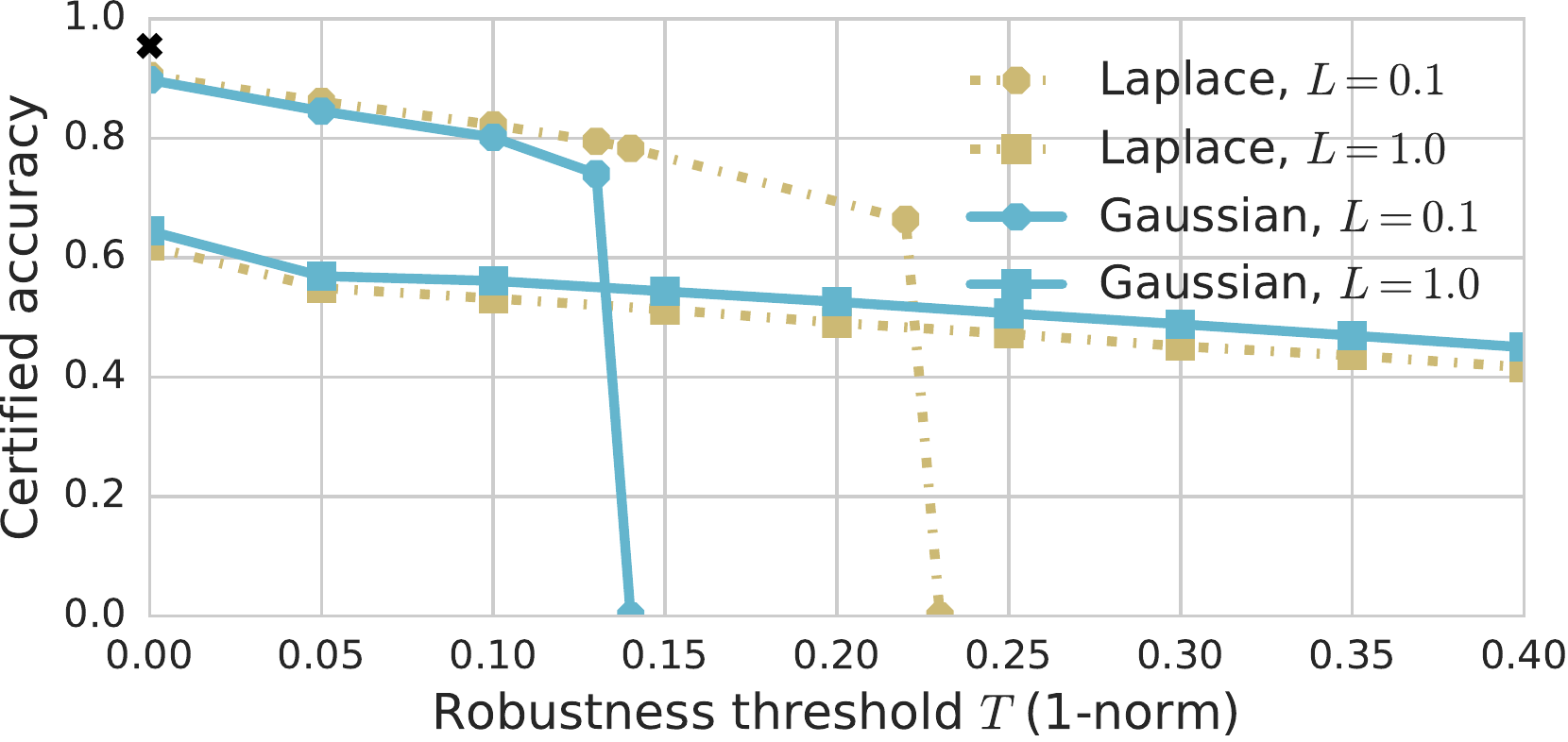}
  \vspace{-2pt}
\caption{{\bf Laplace vs. Gaussian.}
Certified accuracy for a ResNet on the CIFAR-10 dataset, against $1$-norm
  bounded attacks.  The Laplace mechanism yields better accuracy for low noise
  levels, but the Gaussian mechanism is better for high noise ResNets.
}
\label{fig:laplace-vs-gaussian}
\vspace{-15pt}
\end{figure}

\subsection{Attack Details}
\label{appendix:pgd-attack-details}

All evaluation results (\S\ref{sec:evaluation}) are based on the attack from
Carlini and Wagner~\cite{carlini-attacks}, specialized to better attack \pixeldp
(see parameters and adaptation in \S\ref{sec:evaluation-methodology}). We also
implemented variants of the iterative Projected Gradient Descent (PGD) attack
described in~\cite{madry}, modified to average gradients over $15$ noise draws
per step, and performing each attack $15$ times with a small random initialization.
We implemented two version of this PGD attack.

{\em 2-norm Attack:} The gradients are normalized before applying the step
size, to ensure progress even when gradients are close to flat. We perform
$k=100$ gradient steps and select a step size of $\frac{2.5L}{k}$. This
heuristic ensures that all feasible points within the 2-norm ball can be reached
after $k$ steps.  After each step, if the attack is larger than $L$, we project
it on the 2-norm ball by normalizing it. Under this attack, results were
qualitatively identical for all experiments. The raw accuracy numbers were a
few percentage points higher (i.e. the attack was slightly less efficient), so
we kept the results for the Carlini and Wagner attack.

{\em $\infty$-norm Attack:} We perform $max(L+8, 1.5L)$ gradient steps and
maintain a constant step of size of $0.003$ (which corresponds to the minimum pixel
increment in a discrete $[0, 255]$ pixel range). At the end of each gradient
step we clip the size of the perturbation to enforce a perturbation within the
$\infty$-norm ball of the given attack size.
We used this attack to compare \pixeldp with models from Madry and RobustOpt
(see results in Appendix \ref{appendix:linf-attacks}).

Finally, we performed sanity checks suggested in~\cite{obfuscated-gradients}.
The authors observe that several heuristic defenses do not ensure the absence of
adversarial examples, but merely make them harder to find by obfuscating
gradients. This phenomenon, also referred to as gradient masking
\cite{sok-towards-the-science-of-security-and-privacy, tramer2017ensemble},
makes the defense susceptible to new attacks crafted to circumvent that
obfuscation~\cite{obfuscated-gradients}.
Although \pixeldp provides certified accuracy bounds that are {\em guaranteed}
to hold regardless of the attack used, we followed guidelines from
\cite{obfuscated-gradients}, to to rule out obfuscated gradients in our
empirical results. We verified three properties that can be symptomatic of
problematic attack behavior.  First, when growing $T$, the accuracy drops to $0$
on all models and datasets.  Second, our attack significantly outperforms random
sampling.  Third, our iterative attack is more powerful than the respective
single-step attack.

\subsection{$\infty$-norm Attacks}
\label{appendix:linf-attacks}

\begin{figure}[t]
\centering
  \includegraphics[width=0.40\textwidth]{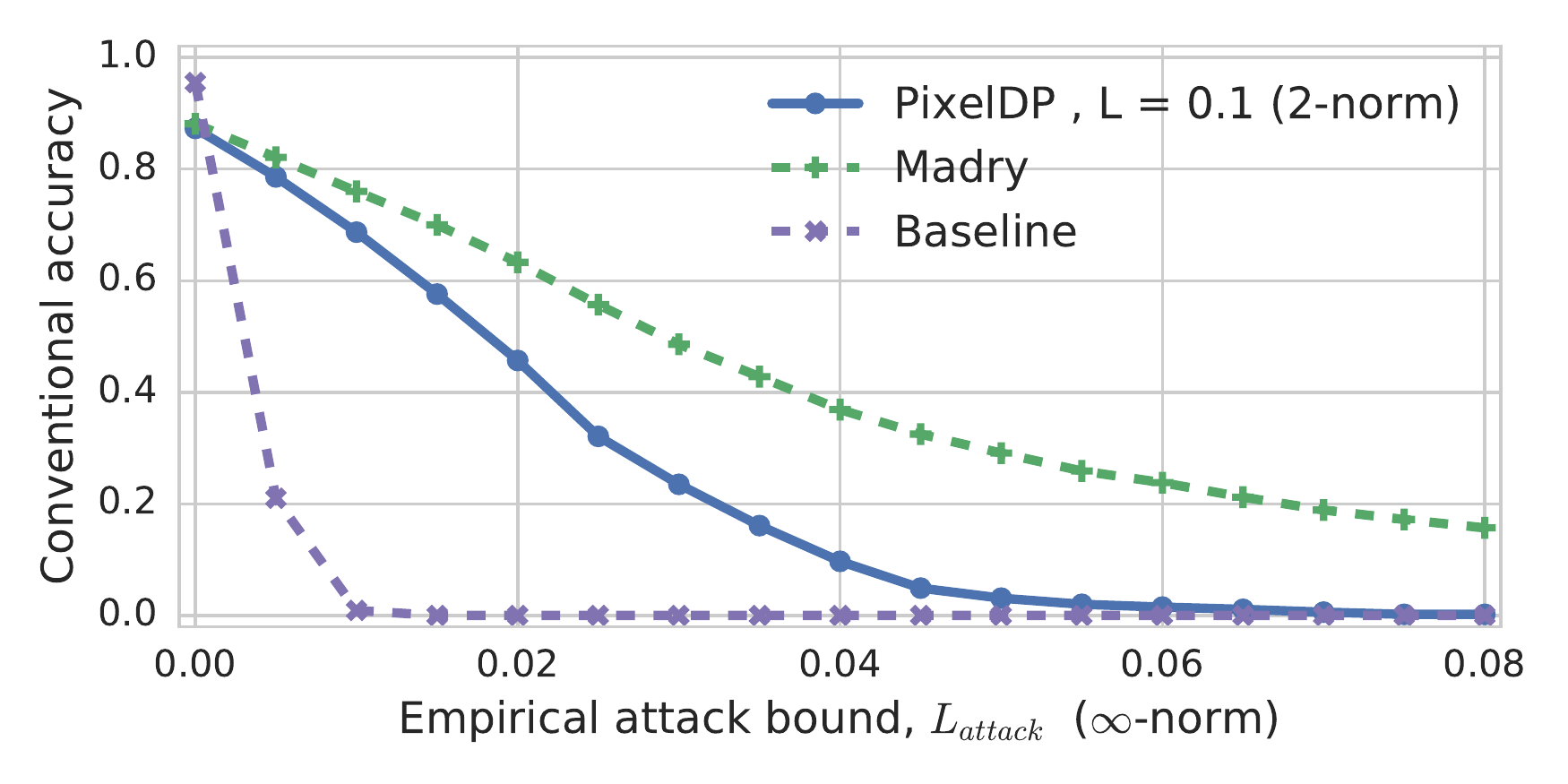}
  \vspace{-5pt}
\caption{{\bf Accuracy under $\infty$-norm attacks for \pixeldp and Madry.}
The Madry model, explicitly trained against $\infty$-norm attacks, outperforms \pixeldp. The difference increases with the size of the attack.
}
\label{fig:linf-madry}
\vspace{-15pt}
\end{figure}

\begin{figure}[t]
\centering
  \includegraphics[width=0.40\textwidth]{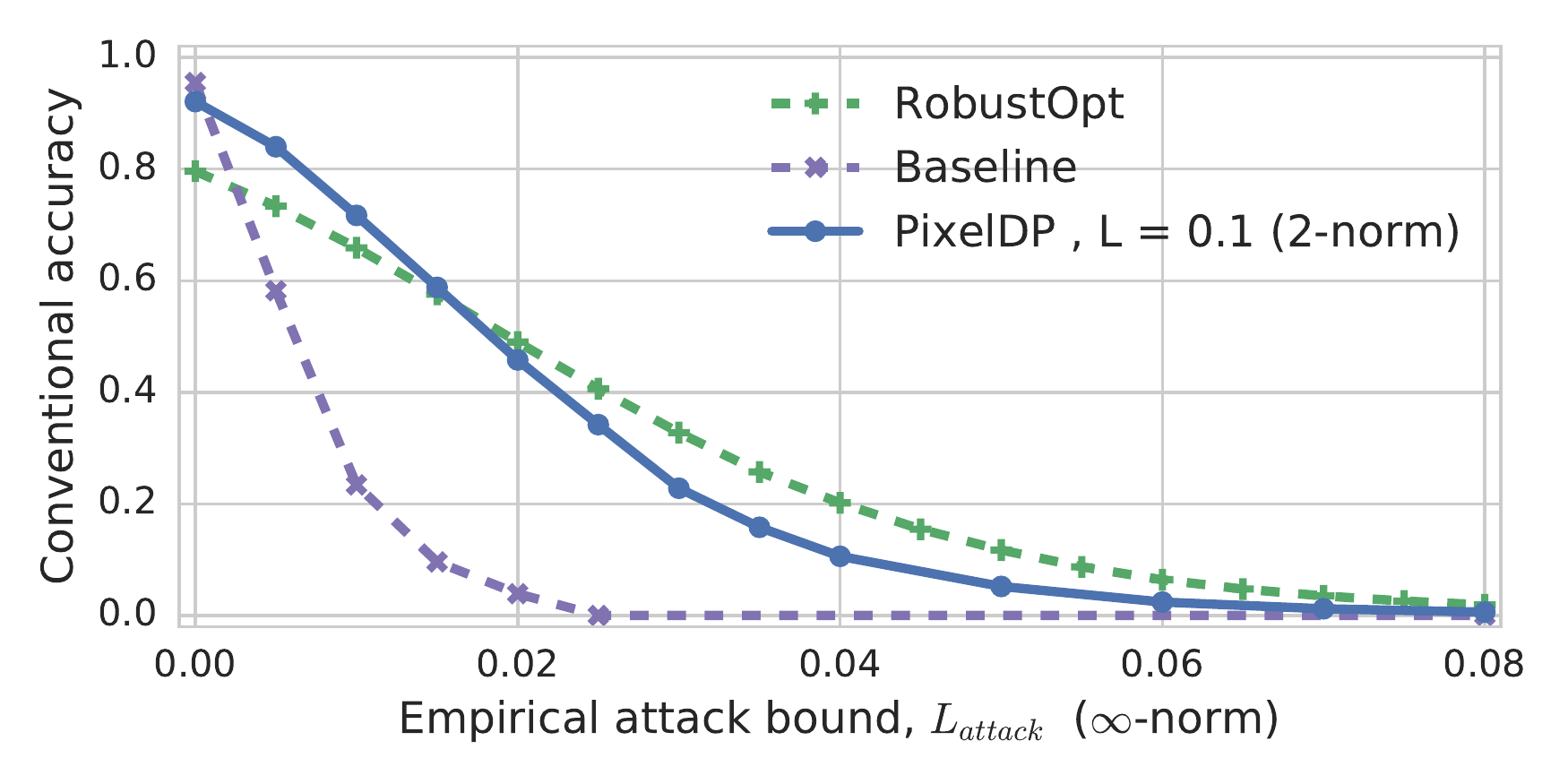}
  \vspace{-5pt}
\caption{{\bf Accuracy under $\infty$-norm attacks for \pixeldp and RobustOpt.}
\pixeldp is better up to $L_\infty=0.015$, due to its support of larger ResNet models. For attacks of $\infty$-norm above this value, RobustOpt is more robust.
}
\label{fig:linf-cmu}
\vspace{-15pt}
\end{figure}

As far as $\infty$-norm attacks are concerned, we acknowledge that the size
of the attacks against which our current \pixeldp defense can certify accuracy is
substantially lower than that of previous certified defenses.
Although previous defenses have been demonstrated on MNIST and SVHN only, and for
smaller DNNs, they achieve $\infty$-norm
defenses of $T_\infty=0.1$ with robust accuracy $91.6\%$
\cite{wong2018provable} and $65\%$
\cite{raghunathan2018certified} on MNIST. On SVHN, \cite{wong2018provable}
uses $T_\infty=0.01$, achieving 59.3\% of certified accuracy.
Using the crude bounds we have between $p$-norms makes a comparison difficult in both directions. Mapping
$\infty$-norm bounds in $2$-norm gives $T_2 \geq T_\infty$, also yielding very small bounds.
On the other hand, translating $2$-norm guarantees into
$\infty$-norm ones (using that $\|x\|_2 \leq \sqrt{n} \|x\|_\infty$ with $n$ the
size of the image) would require a $2$-norm defense of size $T_2=2.8$ to match the $T_\infty=0.1$ bound from MNIST, an order of magnitude higher than what we can achieve.
As comparison points, our $L=0.3$ CNN has a robust accuracy of $91.6\%$ at
$T=0.19$ and $65\%$ at $T=0.39$.
We make the same observation on SVHN, where we would need a bound at $T_2=0.56$ to match the $T_\infty=0.01$ bound, but our ResNet with $L=0.1$ reaches a similar robust accuracy as RobustOpt for $T_2=0.1$.
This calls for the design $\infty$-norm specific \pixeldp mechanisms that could also scale to larger DNNs and datasets.

On Figures~\ref{fig:linf-madry} and~\ref{fig:linf-cmu}, we show \pixeldp's
accuracy under $\infty$-norm attacks, compared to the Madry and RobustOpt
models, both trained specifically against this type of attacks.
On CIFAR-10, the Madry model outperforms \pixeldp: for $L_{attack}=0.01$,
\pixeldp's accuracy is 69\%, 8 percentage points lower than Madry's. The gap increases
until \pixeldp arrives at $0$ accuracy for $L_{attack}=0.06$, with Madry still having
22\%.

On SVHN, against the RobustOpt model, trained with robust optimization against $\infty$-norm attacks, \pixeldp is better up to $L_\infty=0.015$, due to its support of larger ResNet models. For attacks of $\infty$-norm above this value, RobustOpt is more robust.

\subsection{Extension to regression}
\label{appendix:regression-extension}

A previous version of this paper contained an incorrect claim in the statement of Lemma~\ref{lemma:expectation-bound} for outputs that can be negative. Because the paper focused on classification, where DNN scores are in $[0,1]$, the error had no impact on the claims or experimental results for classification.
Lemma~\ref{lemma:general-expectation-bound}, below, provides a correct version of Lemma~\ref{lemma:expectation-bound} for outputs that can be negative, showing how PixelDP can be extended to support regression problems:

\begin{lemma} \label{lemma:general-expectation-bound}
	{\bf (General Expected Output Stability Bound)}
	Suppose a randomized function $A$, with bounded output $A(x) \in [a,b], \ a,b \in \mathbb{R}$, with $a \leq 0 \leq b$, satisfies $(\epsilon,\delta)$-DP.
  Let $A_+(x) = \max(0, A(x))$ and $A_-(x) = -\min(0, A(x))$, so that $A(x) = A_+(x) - A_-(x)$.
  Then the expected value of its output meets the following property:
  for all $\alpha \in B_p(1)$,
  \begin{align*}
    & \E(A(x + \alpha)) \leq e^\epsilon \E(A_+(x)) - e^{-\epsilon} \E(A_-(x)) + b\delta - e^{-\epsilon} a\delta , \\
    & \E(A(x + \alpha)) \geq e^{-\epsilon} \E(A_+(x)) - e^{\epsilon} \E(A_-(x)) - e^{-\epsilon}b\delta + a\delta .
  \end{align*}
	The expectation is taken over the randomness in $A$.
\end{lemma}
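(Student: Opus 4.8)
The plan is to reduce to the already-established non-negative case (Lemma~\ref{lemma:expectation-bound}) by decomposing $A$ into its positive and negative parts. Using the given split $A = A_+ - A_-$ with $A_+(x) = \max(0, A(x))$ and $A_-(x) = -\min(0, A(x))$, I first observe that since $a \leq 0 \leq b$ both parts are non-negative and bounded: $A_+(x) \in [0,b]$ and $A_-(x) \in [0,-a]$. Each of $A_+$ and $A_-$ is obtained by applying a fixed deterministic map ($t \mapsto \max(0,t)$, resp. $t \mapsto -\min(0,t)$) to the output of $A$, so by the post-processing property of differential privacy both $A_+$ and $A_-$ inherit the $(\epsilon,\delta)$-DP guarantee of $A$. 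Lemma~\ref{lemma:expectation-bound} therefore applies to each of them directly.

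Next I would apply Lemma~\ref{lemma:expectation-bound} to $A_+$ (with output bound $b$) and to $A_-$ (with output bound $-a$), and crucially in \emph{both} directions. The lemma holds for every base point and every $\alpha \in B_p(1)$; because the ball is symmetric ($\|-\alpha\|_p = \|\alpha\|_p$), instantiating it at the shifted point $x+\alpha$ with perturbation $-\alpha$ yields the reverse inequality. For the positive part this produces the two-sided bound
\begin{align*}
e^{-\epsilon}\E(A_+(x)) - e^{-\epsilon} b\delta \;\leq\; \E(A_+(x+\alpha)) \;\leq\; e^{\epsilon}\E(A_+(x)) + b\delta,
\end{align*}
and the analogous sandwich for $A_-$ with $b$ replaced by $-a$, giving four inequalities in total that bound $\E(A_\pm(x+\alpha))$ in terms of $\E(A_\pm(x))$.

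Finally I would assemble the two claimed statements from the identity $\E(A(x+\alpha)) = \E(A_+(x+\alpha)) - \E(A_-(x+\alpha))$. For the upper bound I combine the upper estimate on $\E(A_+(x+\alpha))$ with the lower estimate on $\E(A_-(x+\alpha))$; for the lower bound I do the reverse; in each case collecting the $\delta$ terms reproduces exactly the stated right-hand sides. The step I expect to be the crux --- and precisely what the earlier erroneous version of the lemma mishandled --- is recognizing that the negative part enters with a minus sign, so bounding $A_+ - A_-$ forces the DP inequality to be invoked in \emph{opposite} directions for the two parts. This is why the asymmetric factors $e^{\epsilon}$ and $e^{-\epsilon}$ must both appear, rather than the single $e^{\epsilon}$ that sufficed in the non-negative case; once this directional choice is made, the remaining sign bookkeeping on the $b\delta$ and $a\delta$ terms is routine.
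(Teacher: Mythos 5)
Your proof is correct and takes essentially the same route as the paper's: the paper derives the identical four two-sided bounds on $\E(A_\pm(x+\alpha))$ by writing $\E(A_+(x)) = \int_0^b P(A(x)>t)\,dt$ and applying the DP inequality of Equation~\eqref{eq:dp} in both directions, which is exactly what your reduction via post-processing plus Lemma~\ref{lemma:expectation-bound} unfolds to, and it combines them with the same directional sign choices you identify as the crux. Your explicit appeal to the post-processing property and the symmetry of $B_p(1)$ is merely a more modular packaging of the same argument, and all of your bookkeeping on the $b\delta$ and $a\delta$ terms checks out.
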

\begin{proof}
	Consider any $\alpha \in B_p(1)$, and let $x' := x + \alpha$.
  Observe that $\E(A_+(x)) = \int_0^b P(A(x)>t) dt$, so
  by the $(\epsilon, \delta)$-DP property of $A$ via Equation~\eqref{eq:dp},
  we have $\E(A_+(x')) \leq e^\epsilon \E(A_+(x)) + b\delta$ and
  $\E(A_+(x')) \geq e^{-\epsilon} \E(A_+(x)) - e^{-\epsilon} b\delta$.
  Similarly,
  $\E(A_-(x')) \leq e^\epsilon \E(A_-(x)) - a\delta$ and
  $\E(A_-(x')) \geq e^{-\epsilon} \E(A_+(x)) + e^{-\epsilon} a\delta$.
  Putting these four inequalities together concludes the proof.
\end{proof}

Following Lemma~\ref{lemma:general-expectation-bound}, supporting regression problems involves three steps. First, if the output is unbounded, one must use $(\epsilon, 0)$-DP (e.g. with the Laplace mechanism). If the output is bounded, one may use $(\epsilon, \delta)$-DP. The output may be bounded either naturally, because the specific task has inherent output bounds, or by truncating the results to a large range of values and using a comparatively small $\delta$.

Second, instead of estimating the expected value of the randomized prediction function, we estimate both $A_+(x)$ and $A_-(x)$. We can use Hoeffding's inequality \cite{hoeffding1963probability} or Empirical Bernstein bounds \cite{DBLP:conf/colt/MaurerP09} to bound the error.

Third, following Lemma~\ref{lemma:general-expectation-bound}, we bound $A_+(x)$ and $A_-(x)$ separately using the DP Expected Output Stability Bound, to obtain a bound on $\E(A(x)) = \E(A_+(x)) - \E(A_-(x))$.

\end{document}